\pdfoutput=1

\documentclass[11pt]{article}

\usepackage[final]{acl}

\usepackage{times}
\usepackage{latexsym}

\usepackage[T1]{fontenc}

\usepackage[utf8]{inputenc}

\usepackage{microtype}

\usepackage{inconsolata}

\usepackage{graphicx}

\usepackage{amsthm}
\usepackage{amsmath}
\usepackage{amssymb}
\usepackage{mathtools}

\usepackage{algpseudocode}
\usepackage{amsmath}

\usepackage[utf8]{inputenc} 
\usepackage[T1]{fontenc}    
\usepackage{hyperref}       
\usepackage{url}            
\usepackage{booktabs}       
\usepackage{amsfonts}       
\usepackage{nicefrac}       
\usepackage{microtype}      

\usepackage{bm}

\usepackage{verbatim}
\usepackage{balance}
\usepackage{graphicx}
\usepackage{multirow}
\usepackage{xspace}
\usepackage{threeparttable}
\usepackage{enumitem}
\usepackage{makecell}
\usepackage{bbding}
\usepackage{subfig}
\usepackage{float}
\usepackage{wrapfig}
\usepackage[export]{adjustbox}
\usepackage{cleveref}
\usepackage{tikz}
\usepackage{pifont}

\usepackage{algorithm}
\usepackage{algpseudocodex}

\newtheorem{theorem}{Theorem}


\usepackage{amsmath,amsfonts,bm}









\def\eqref#1{equation~\ref{#1}}









\def\1{\bm{1}}








\def\vx{{\bm{x}}}
\def\vy{{\bm{y}}}



\DeclareMathAlphabet{\mathsfit}{\encodingdefault}{\sfdefault}{m}{sl}
\SetMathAlphabet{\mathsfit}{bold}{\encodingdefault}{\sfdefault}{bx}{n}













\newcommand*{\para}[1]{\noindent\textbf{#1}}

\newcommand{\vpara}[1]{\vspace{0.04in}\noindent\textbf{#1}\xspace}

\newcommand{\model}{TreeRL\xspace}
\newcommand{\treemodel}{EPTree\xspace}
\newcommand{\hide}[1]{}

%
%

\title{\model: LLM Reinforcement Learning with On-Policy Tree Search}


\author{
  Zhenyu Hou\textsuperscript{*,1} \quad
  Ziniu Hu\textsuperscript{*,2} \quad
  Yujiang Li\textsuperscript{*,1} \quad
  Rui Lu\textsuperscript{*,1} \quad
  Jie Tang\textsuperscript{1} \quad  Yuxiao Dong\textsuperscript{1} 
  \\
      \textsuperscript{1}Tsinghua University \quad
    \textsuperscript{2}California Institute of Technology 
  }


\newcommand{\ZN}[1]{\textcolor{red}{[Ziniu: #1]}}

\begin{document}
\maketitle

\renewcommand{\thefootnote}{\fnsymbol{footnote}}
    \footnotetext[1]{Equal contribution; order is alphabetical.}
\renewcommand{\thefootnote}{\arabic{footnote}}

\begin{abstract}
 Reinforcement learning (RL) with tree search has demonstrated superior performance in traditional reasoning tasks. Compared to conventional independent chain sampling strategies with outcome supervision, tree search enables better exploration of the reasoning space and provides dense, on-policy process rewards during RL training but remains under-explored in On-Policy LLM RL.
 We propose \model, a reinforcement learning framework that directly incorporates on-policy tree search for RL training. Our approach includes intermediate supervision and eliminates the need for separate reward model training. Existing approaches typically train a separate process reward model, which can suffer from distribution mismatch and reward hacking. 
 We also introduce a cost-effective tree search approach that achieves higher search efficiency under the same generation token budget by strategically branching from high-uncertainty intermediate steps rather than using random branching. 
 Experiments on challenging math and code reasoning benchmarks demonstrate that \model achieves superior performance compared to traditional ChainRL, highlighting the potential of tree search for LLM. 
 \model is open-sourced at \url{https://github.com/THUDM/TreeRL}.

\hide{
Reinforcement learning (RL) with tree search has demonstrated superior performance in traditional reasoning tasks.
While RL is becoming increasingly important in language model reasoning, 
\ZN{This claim is too weak. We shall not say we do this research because it's "unexplored", but pointing why it's necessary; e.g., it provides fine-grained dense / process reward that traditional iid sampling is not able to get (and what's the potential risk)}
the potential of on-policy tree search to further empower RL for language models remains largely unexplored.
We propose \model, a reinforcement learning framework that incorporates on-policy tree search and process supervision\ZN{This part is also unclear, never introduce what is process supervision and what is on-policy tree search. One way is saying people have been using tree search to collect off-policy supervision traces to train PRM and use for RL, but that PRM can be easily hacked and not on-policy, so in this paper we try to explore whether we can directly use on-policy tree search to collect on-policy intermediate supervision to directly do RL without training a seperate reward model}. We first introduce \treemodel, an efficient and effective tree search strategy. Unlike traditional step-by-step tree generation, \treemodel promotes exploration and efficiency by forking branches from uncertain intermediate reasoning steps.\ZN{This is also a bit unclear; hard to understand what is uncertain steps, and why it improve "efficiency" and "exploration". I personally lean towards we emphasize more on the on-policy part, and this tree sampling is a side-novelty but maybe not main contribution (otherwise you need to compare with many other tree-sampling algorithms)}
We then integrate \treemodel with reinforcement learning and further optimize the RL with process supervision signals derived from the tree search.
Experiments on challenging math and code reasoning benchmarks demonstrate that \model achieves superior performance compared to traditional ChainRL, highlighting the potential of tree search for LLM.
}


\hide{
Large language models have demonstrated remarkable capabilities in complex reasoning tasks. 
However, reinforcement learning approaches like Monte Carlo Tree Search face limitations in both effectiveness and efficiency. 
We present \model, which combines an efficient tree-based search strategy with process supervision signals. 
Our approach forks new branches from uncertain intermediate tokens rather than breaking answers into smaller steps, requiring fewer iterations while maintaining exploration effectiveness. Additionally, we leverage the generated trees to provide process supervision signals through advantage calculations, avoiding reward hacking without external reward models. Experiments on college-level and competition-level reasoning benchmarks show that \model outperforms traditional independent sampling approaches, highlighting the potential of tree search-based reinforcement learning in advancing language model reasoning capabilities.
}
\end{abstract}

\section{Introduction}
Large language models (LLMs) have demonstrated remarkable capabilities across diverse complex reasoning tasks~\citep{achiam2023gpt,team2023gemini,dubey2024llama3}, including mathematics~\cite{shao2024deepseekmath}, programming~\citep{lozhkov2024starcoder,zhu2024deepseek}, and autonomous agents~\citep{zhouwebarena}.
Reinforcement learning (RL) has emerged as a powerful approach to significantly improve the reasoning abilities of LLMs by optimizing the policy through reward feedback~\citep{openaio1,guo2025deepseek,hou2025advancing,shao2024deepseekmath}.

\begin{figure}[t]
    \centering
    \begin{minipage}{0.235\textwidth}
        \centering
        \includegraphics[width=\textwidth]{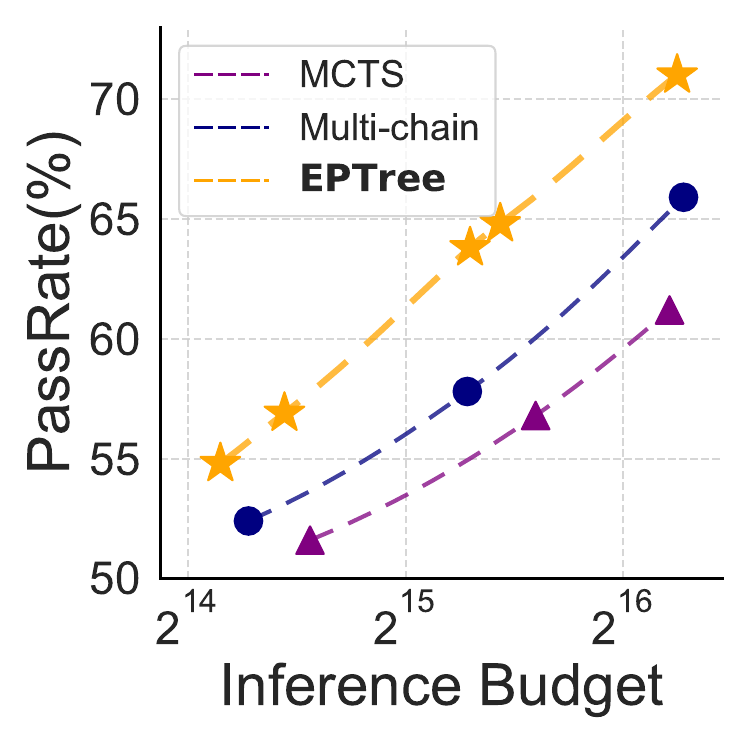}
        \label{fig:third_image}
    \vspace{-7mm}
    \end{minipage}
    \hfill
    \begin{minipage}{0.235\textwidth}
        \centering
        \includegraphics[width=\textwidth]{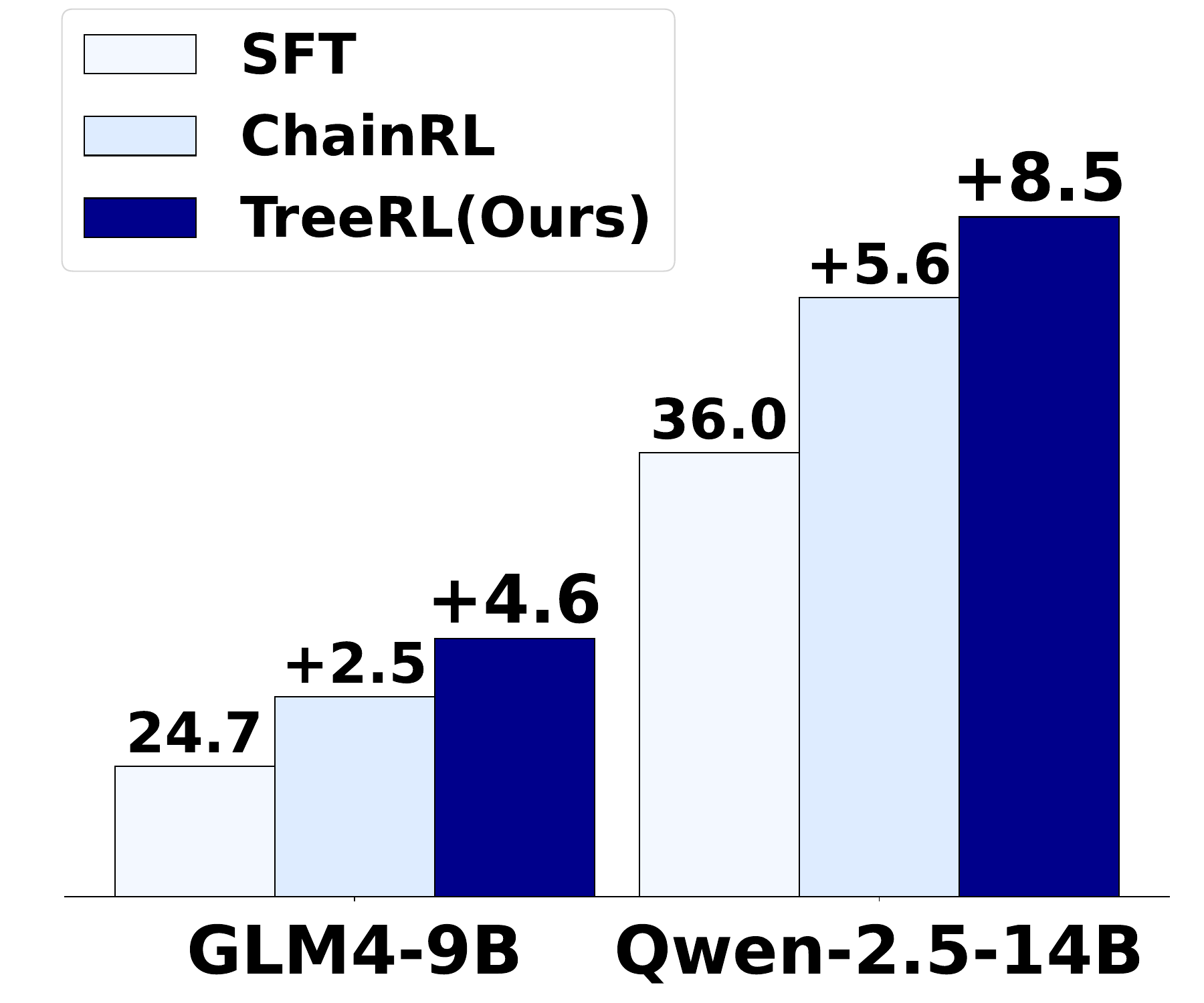}
        \label{fig:third_image}
    \vspace{-3mm}
    \end{minipage}
    \vspace{-3mm}
    \caption{\textit{Left}: Performance comparison of sampling strategies. \treemodel consistently outperforms i.i.d multi-chain sampling and MCTS under different inference budgets. \textit{Right}: \model powered with \treemodel demonstrates better performance than ChainRL with i.i.d multi-chain sampling.}
    \label{fig:performance_preview}
\end{figure}

\begin{figure*}
    \centering
    \includegraphics[width=\linewidth]{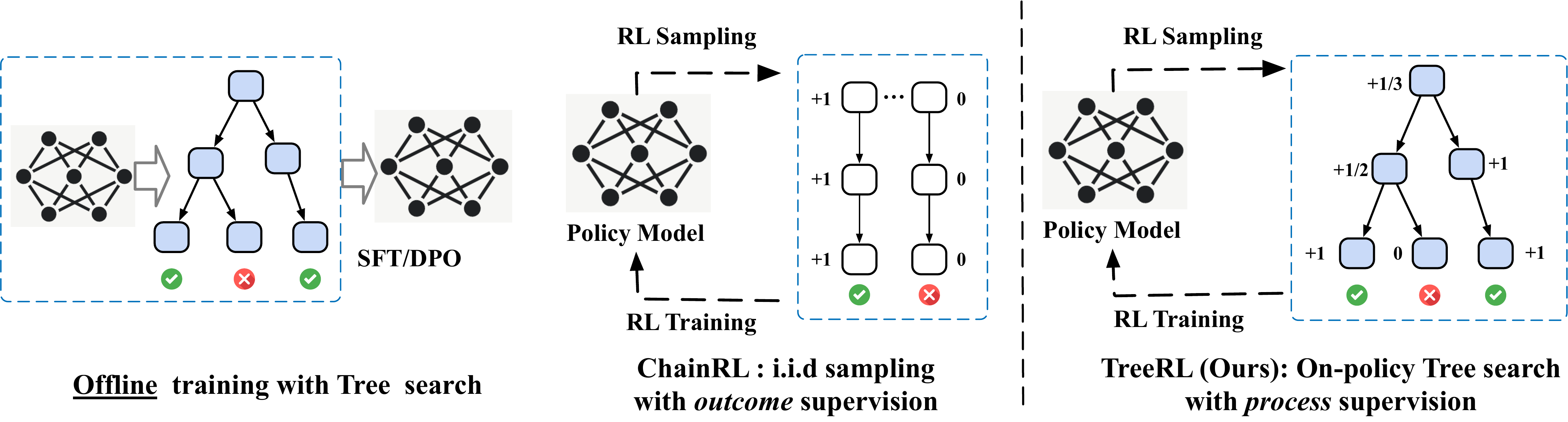}
    \caption{Illustration of offline training with tree search (Left), traditional ChainRL with online i.i.d multi-response sampling (Middle), and \model with tree search (Right).}
    \label{fig:illustration}
\end{figure*}


Current RL methods for LLM training generally independently sample multiple trajectories~\cite{shao2024deepseekmath,wang2024mathshepherd,touvron2023llama2} and obtain reward signals based on the final answers. 
However, tree search, which has demonstrated significant success in other domains like AlphaZero~\cite{silver2017mastering}, remains under-developed in reinforcement learning for LLM reasoning. 
Existing efforts have mainly focused on using tree search to enhance inference-time performance alongside an external reward~\cite{zhang2024rest, chen2024alphamath}, or to produce data for offline training~\cite{chen2024alphamath,xie2024monte,zhang2024rest} (e.g., finetuning or DPO~\cite{rafailov2023direct}), as illustrated in Figure~\ref{fig:illustration}. But \citet{guo2025deepseek} also demonstrates the limitation of distribution shift and reward hacking in offline tree search compared to online RL training. Up to now, the potential of on-policy RL training incorporating tree search to improve LLM reasoning remains largely unexplored.

\hide{
The challenges are two-folds. First, classical Monte Carlo Tree Search (MCTS) can be less effective than independently sampling multiple responses under the same inference cost, which could hinder the performance of reinforcement learning. As shown in Table~\ref{fig:illustration}, MCTS exhibits worse PassRate performance and thus is less powerful in searching for the correct answer. 
Second, MCTS is far less efficient than sampling multiple responses because it demands numerous iterative, step-by-step generation, which suffers from lower parallel and is not friendly for the current LLM inference engine. 
}

\hide{
The challenges are two-fold. First, classical Monte Carlo Tree Search (MCTS)~\citep{browne2012mctssurvey} can be less effective than independently sampling multiple responses at the same inference cost, which can hinder the performance of reinforcement learning. As shown in Figure~\ref{fig:performance_preview}, MCTS demonstrates poorer PassRate performance, making it less effective in searching for the correct answer.
Second, MCTS is significantly less efficient than sampling multiple responses, as it requires numerous iterative, step-by-step generations. This process suffers from lower parallelism and is not well-suited for modern LLM inference engines.
}

The challenges are two-fold. First, classical Monte Carlo Tree Search (MCTS)~\citep{browne2012mctssurvey} can be less effective nd efficient than independently sampling multiple responses. MCTS achieves a lower PassRate performance under the same inference cost, as shown in Figure~\ref{fig:performance_preview}. The step-by-step generation requires numerous iterations and is not friendly to modern LLM inference engines.
Second, though tree search could provide fine-grained process supervision, the derived offline process reward model almost contributes to no performance improvement in RL training~\cite{guo2025deepseek}. 

To address this gap, we propose \model, a reinforcement learning (RL) approach for LLMs employed with tree search. Under the same inference cost as independent multiple sampling, our method generates more diverse and effective responses and also provides on-policy process supervision signals to further boost RL performance.

First, we introduce an efficient and effective tree search strategy \treemodel.
Unlike MCTS which breaks answers into smaller parts to allow the model to explore step-by-step, we obtain a new response by forking branches from the most uncertain intermediate tokens in the existing tree based on entropy and continuing the generation until the final answer. Thus \treemodel requires fewer tokens but encourages effective exploration for the diverse answers.
And it typically requires only around two iterations to form the generation trees.



One step further, we leverage the tree search for reinforcement learning with process supervision. Each step in the trees is assigned a credit based on advantage, i.e., how much that step improves the likelihood of reaching a correct solution compared to other steps. The process signal for a given reasoning step is calculated as a weighted sum of two components: 1) global advantage, which reflects the step's potential over the overall correctness rate 
for the question, and 2) local advantage, which quantifies the improvement the step provides compared to its parent step in the tree. Since these advantage signals are derived directly from the on-policy generated trees, they are inherently resistant to reward hacking~\citep{skalse2022defining} and do not rely on any additional reward models.


We evaluate \model on challenging college-level and competition-level math and code reasoning benchmarks based on Qwen~\citep{qwen2_5} and GLM~\citep{glm2024chatglm}. Experiments show that \model achieves superior performance and demonstrates advantages over traditional independent multi-chain sampling. The gain benefits from both \treemodel with promising PassRate performance and process supervision. These results highlight the potential of RL with tree search to advance complex reasoning capabilities for LLM. The implementation of \model is available at \url{https://github.com/THUDM/TreeRL}.

\hide{
\section{Introduction}
Large language models (LLMs) have demonstrated remarkable capabilities across diverse complex reasoning tasks~\citep{achiam2023gpt,team2023gemini,dubey2024llama3}, including mathematics~\cite{shao2024deepseekmath}, programming~\citep{lozhkov2024starcoder,zhu2024deepseek}, and autonomous agents~\citep{zhouwebarena}.
Reinforcement learning (RL) has emerged as a powerful approach to significantly improve the reasoning abilities of LLMs by optimizing the policy through reward feedback~\citep{openaio1,guo2025deepseek,hou2025advancing,shao2024deepseekmath}.

\begin{figure}[t]
    \centering
    \begin{minipage}{0.235\textwidth}
        \centering
        \includegraphics[width=\textwidth]{figure/head_tree_iid_mcts_performance.pdf}
        \label{fig:third_image}
    \vspace{-7mm}
    \end{minipage}
    \hfill
    \begin{minipage}{0.235\textwidth}
        \centering
        \includegraphics[width=\textwidth]{figure/performance-preview.pdf}
        \label{fig:third_image}
    \vspace{-3mm}
    \end{minipage}
    \vspace{-3mm}
    \caption{\textit{Left}: Performance comparison of sampling strategies. \treemodel consistently outperforms i.i.d multi-chain sampling and MCTS under different inference budgets. \textit{Right}: \model powered with \treemodel demonstrates better performance than ChainRL with i.i.d multi-chain sampling.}
    \label{fig:performance_preview}
\end{figure}

\begin{figure*}
    \centering
    \includegraphics[width=\linewidth]{figure/illustration-v2.pdf}
    \caption{Illustration of offline training with tree search (Left), traditional ChainRL with online i.i.d multi-response sampling (Middle), and \model with tree search (Right).}
    \label{fig:illustration}
\end{figure*}


Current RL methods for LLM training generally independently sample multiple trajectories~\cite{shao2024deepseekmath,wang2024mathshepherd,touvron2023llama2} and obtain reward signals based on the final answers. 
However, tree search, which has demonstrated significant success in other domains like AlphaZero~\cite{silver2017mastering}, remains under-developed in reinforcement learning for LLM reasoning. 
Existing efforts have mainly focused on using tree search to enhance inference-time performance alongside an external reward~\cite{zhang2024rest, chen2024alphamath}, or to produce data for offline training~\cite{chen2024alphamath,xie2024monte,zhang2024rest} (e.g., finetuning or DPO~\cite{rafailov2023direct}), as illustrated in Figure~\ref{fig:illustration}. But \citet{guo2025deepseek} also demonstrates the limitation of distribution shift and reward hacking in offline tree search compared to online RL training. Up to now, the potential of on-policy RL training incorporating tree search to improve LLM reasoning remains largely unexplored.

\hide{
The challenges are two-folds. First, classical Monte Carlo Tree Search (MCTS) can be less effective than independently sampling multiple responses under the same inference cost, which could hinder the performance of reinforcement learning. As shown in Table~\ref{fig:illustration}, MCTS exhibits worse PassRate performance and thus is less powerful in searching for the correct answer. 
Second, MCTS is far less efficient than sampling multiple responses because it demands numerous iterative, step-by-step generation, which suffers from lower parallel and is not friendly for the current LLM inference engine. 
}

\hide{
The challenges are two-fold. First, classical Monte Carlo Tree Search (MCTS)~\citep{browne2012mctssurvey} can be less effective than independently sampling multiple responses at the same inference cost, which can hinder the performance of reinforcement learning. As shown in Figure~\ref{fig:performance_preview}, MCTS demonstrates poorer PassRate performance, making it less effective in searching for the correct answer.
Second, MCTS is significantly less efficient than sampling multiple responses, as it requires numerous iterative, step-by-step generations. This process suffers from lower parallelism and is not well-suited for modern LLM inference engines.
}

The challenges are two-fold. First, classical Monte Carlo Tree Search (MCTS)~\citep{browne2012mctssurvey} can be less effective nd efficient than independently sampling multiple responses. MCTS achieves a lower PassRate performance under the same inference cost, as shown in Figure~\ref{fig:performance_preview}. The step-by-step generation requires numerous iterations and is not friendly to modern LLM inference engines.
Second, though tree search could provide fine-grained process supervision, the derived offline process reward model almost contributes to no performance improvement in RL training~\cite{guo2025deepseek}. 

To address this gap, we propose \model, a reinforcement learning (RL) approach for LLMs employed with tree search. Under the same inference cost as independent multiple sampling, our method generates more diverse and effective responses and also provides on-policy process supervision signals to further boost RL performance.

First, we introduce an efficient and effective tree search strategy \treemodel.
Unlike MCTS which breaks answers into smaller parts to allow the model to explore step-by-step, we obtain a new response by forking branches from the most uncertain intermediate tokens in the existing tree based on entropy and continuing the generation until the final answer. Thus \treemodel requires fewer tokens but encourages effective exploration for the diverse answers.
And it typically requires only around two iterations to form the generation trees.



One step further, we leverage the tree search for reinforcement learning with process supervision. Each step in the trees is assigned a credit based on advantage, i.e., how much that step improves the likelihood of reaching a correct solution compared to other steps. The process signal for a given reasoning step is calculated as a weighted sum of two components: 1) global advantage, which reflects the step's potential over the overall correctness rate 
for the question, and 2) local advantage, which quantifies the improvement the step provides compared to its parent step in the tree. Since these advantage signals are derived directly from the on-policy generated trees, they are inherently resistant to reward hacking~\citep{skalse2022defining} and do not rely on any additional reward models.


We evaluate \model on challenging college-level and competition-level math and code reasoning benchmarks based on Qwen~\citep{qwen2_5} and GLM~\citep{glm2024chatglm}. Experiments show that \model achieves superior performance and demonstrates advantages over traditional independent multi-chain sampling. The gain benefits from both \treemodel with promising PassRate performance and process supervision. These results highlight the potential of RL with tree search to advance complex reasoning capabilities for LLM. The implementation of \model is available at \url{https://github.com/THUDM/TreeRL}. 

}

\begin{figure*}[!h]
    \centering
    \includegraphics[width=1.0\linewidth]{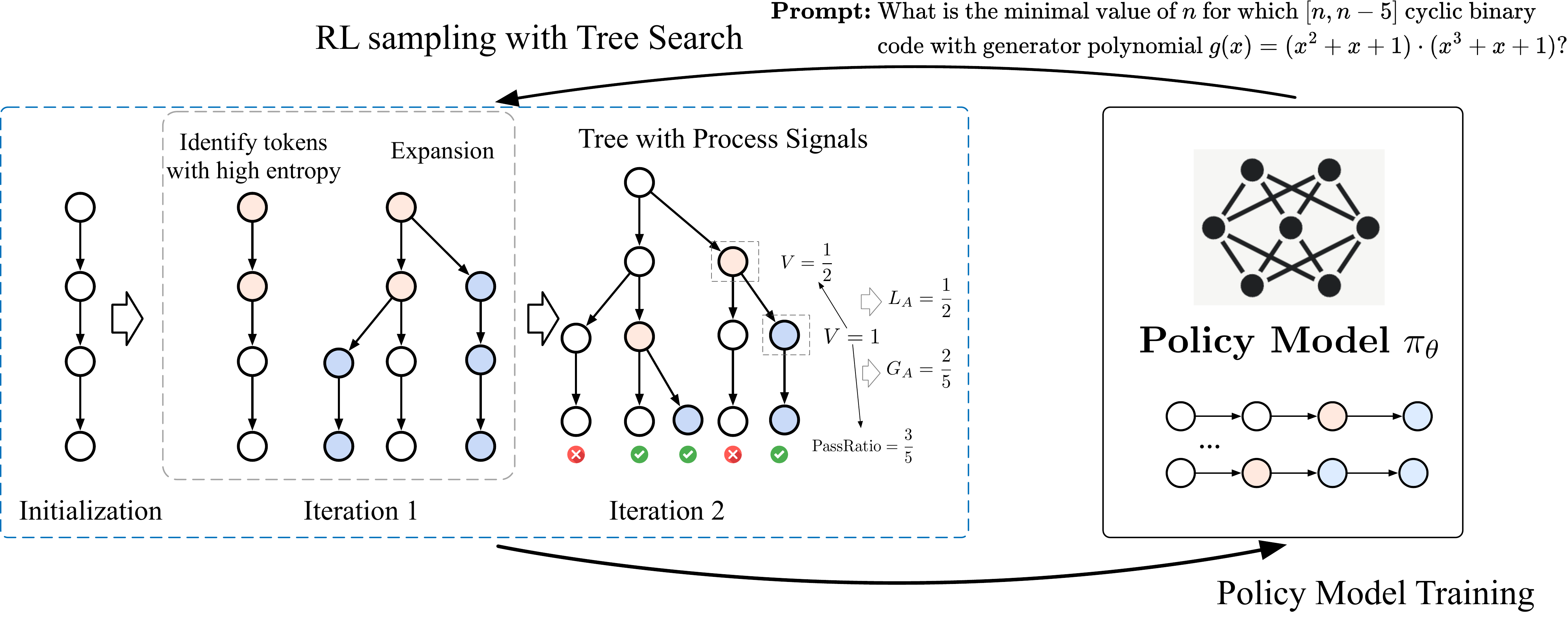}
    \caption{Illustration of \model. In each iteration, \model first performs a tree search using \treemodel, progressively expanding branches from the top-$N$ most uncertain tokens. The resulting tree, along with the process supervision signals derived from each step, is then fed into reinforcement learning to update the policy model.}
    \label{fig:overview}
\end{figure*}

\section{Preliminary}

\hide{
To align the fine-tuned model \(\pi_\theta\) with human feedback, \citet{ouyang2022training} proposes to apply reinforcement learning (RL) to enable LLM to learn from self-exploration.
Reinforcement learning maximizes a reward signal, e.g., human preference or final answer correctness, and is accomplished by optimizing the following objective:
\begin{equation}
    \underset{\vx \sim p_{\text{data}}, \vy \sim \pi_\theta}{\mathbb{E}} \left[ r(\vx, \vy) - \beta \log \frac{\pi_\theta(\vy \mid \vx)}{\pi_{\text{ref}}(\vy \mid \vx)} \right]
    \label{eq:rl_target}
\end{equation}
where \(r(\cdot)\) is the reward function and assesses the quality or correctness of a response. It takes a prompt \(\vx\) and its response \(\vy\) as input and outputs a scalar reward. \(\pi_{\text{ref}}\) refers to a pre-defined reference model and is used in KL to prevent the policy model from shifting too far from $\pi_{\text{ref}}$.

The typical RL for the LLM process works as follows: for a given prompt \(\vx\), the policy model \(\pi_\theta\) generates \(K\) possible responses, denoted as \((\vy_1, \dots, \vy_K)\). The reward function then assigns a scalar reward to each pair \((\vx, \vy_i)\). Afterward, the policy model \(\pi_\theta\) is updated using reinforcement learning to optimize the objective in \Cref{eq:rl_target}.
}

To align the fine-tuned model \(\pi_\theta\) with feedback signals, \citet{ouyang2022training} proposes to apply reinforcement learning (RL) to enable LLM to learn from self-exploration.
Reinforcement learning maximizes a reward signal, e.g., human preference or final answer correctness.
The typical RL for the LLM process works as follows: for a given prompt \(\vx\), the policy model \(\pi_\theta\) generates \(K\) possible responses, denoted as \((\vy_1, \dots, \vy_K)\). The reward function $r(\vx, \vy_i)$ then assigns a scalar reward to each pair \((\vx, \vy_i)\). Afterward, the policy model \(\pi_\theta\) is updated using reinforcement learning to optimize the following objective:
\begin{equation}
    \underset{\vx \sim p_{\text{data}}, \vy \sim \pi_\theta}{\mathbb{E}} \frac{1}{K} \sum_i^{K}  A(\vx, \vy_i) \log \pi_{\theta}(\vy_i|\vx)
    \label{eq:rl_target}
\end{equation}
where $A(\cdot)$ is the advantage function and usually defined as $A(\vx, \vy_i) = \beta(r(\vx, \vy_i) - b)$, where $b$ is the baseline~\cite{mei2022role,chung2021beyond} and varies in different methods. For example, $A(\vy_i) = \frac{r(\vy_i) - \text{mean}\{r(\vy_i)\}}{\text{std}\{r(\vy_i)\}}$ in GRPO~\cite{shao2024deepseekmath} and $A(\vy_i) = r(\vy_i) - \frac{1}{K-1}\sum_{j\neq i}^K r(\vy_i)$ in RLOO~\cite{ahmadian2024rloo}.

\section{\model: Reinforcement Learning with Tree Search}
In this section, we present \model to improve LLM reasoning with tree search. Figure~\ref{fig:overview} shows the overview of \model. We first present an efficient and effective tree search algorithm \treemodel, which guides tree search by token-level uncertainty instead of traditional Monte Carlo Tree Search (MCTS)~\citep{browne2012mctssurvey}. Then, we show how to integrate the tree search into reinforcement learning with process supervision to improve reasoning capability further.

\subsection{Entropy-Guided Tree Search}
\label{sec:entropy-tree}
We aim to optimize the tree-search algorithm for RL training and thus emphasize the PassRate metric, which evaluates the algorithm's ability to generate diverse yet correct answers under a given inference budget. Furthermore, the tree-search algorithm must be efficient and highly parallelizable. In contrast, traditional MCTS requires numerous iterative generations, making it less efficient in current LLM inference engines like VLLM~\citep{kwon2023efficient}.

We propose an entropy-guided tree search algorithm, \treemodel.
The core idea is to iteratively expand the search tree by forking new branches (nodes) from the top-$N$ most uncertain tokens (as measured by entropy) across existing trees.
This encourages exploration in regions of high model uncertainty, leading to improved performance.  
Critically, \treemodel can generate multiple trees in parallel, and the expansion process requires only around 2 iterations to build a diverse and informative tree, making it highly efficient.
The algorithm runs the following steps.

\vpara{Initialization.} To generate $M$ trees in parallel, we first construct \( M \) chains by generating $M$ responses for a given prompt \( \vx \) as initialization of $\mathcal{T}_i$ for further expansion:
\[
Y^{(i)} = \{\vy_i \sim \pi_\theta(\cdot \mid \vx)\},~ \text{for} ~ i = 1, 2, \ldots, M
\]
where \( \pi_\theta \) is the policy model and $\vx$ is the prompt. 



\vpara{Forking token selection.} Next, we aim to expand the trees by forking new branches from the existing trees. We propose forking from the tokens with the highest uncertainty, as these tokens provide the most informative signals for the model and encourage exploration.
We use cross-entropy as a measure of uncertainty, which quantifies the uncertainty in the policy model \(\pi_\theta\) when predicting a given token. To promote expansion, the top-\(N\) tokens with the highest entropy values are selected across the whole tree $\mathcal{T}_i$. 
Specifically, the entropy of each token \(v\) in the tree \(\mathcal{T}_i\) is calculated as follows:
\[
B_i = \text{Top-}N_{H(\cdot|\vx)}\left\{\left(t, H\left(\vy_t \mid \vx, \vy_{<t} \right)\right) \mid t \in \mathcal{T}_i\right\}
\]
where \(H(\vy_t) = -\log \pi_{\theta}(\vy_t \mid \vx,\vy_{<t})\) denotes the entropy of token \(\vy_t\). 
Additionally, we mask tokens near the end of sequences, as the model is expected to explore different reasoning paths rather than simply revisiting previous answers.

\vpara{Expansion.} Given the selected tokens for each tree, we continue the generation process from these tokens to form new branches. For each forking point \( t \), with the prefix \( \vy_t \) and prompt \( \vx \), we generate \( T \) different candidate responses until completion:
\[
Y_{\text{new}}^{(i)} \sim \{\pi_\theta\left(\cdot \mid \vx, \vy_{<t}\right), \ \text{for} \ (t,\cdot) \in B_i\}^{T}
\]
where \( \vy_{<t} \) denotes the prefix response before token \( t \). This results in \( M \times N \times T \) tree nodes in total( \(N \times T\) for each tree $\mathcal{T}_i$ ), each corresponding to a new response. The tree structure \( \mathcal{T}_i \) is updated to include these new nodes:
\[
\mathcal{T}_i \leftarrow \mathcal{T}_i \cup Y_{\text{new}}^{(i)}
\]
After initialization, the forking and expansion process is repeated for \( L \) iterations, leading to \( M \times (T \times N \times L+1 ) \) leaves (responses). We denote this entropy-guided tree search as an \( (M, N, L, T) \)-tree, where \( M \) is the number of parallel trees, \( N \) is the number of forked points per iteration, \( L \) is the number of iterations, and \( T \) is the branching factor at each forking point.

In comparison to independent multi-chain sampling, the \treemodel is capable of producing a larger number of diverse responses under the same inference cost, as illustrated in \Cref{fig:tree_vs_mcts_iid_resp_num}. This offers the potential to enhance RL performance by learning from more varied responses.

\begin{figure}[t]
        \centering
        \includegraphics[width=0.45\textwidth]{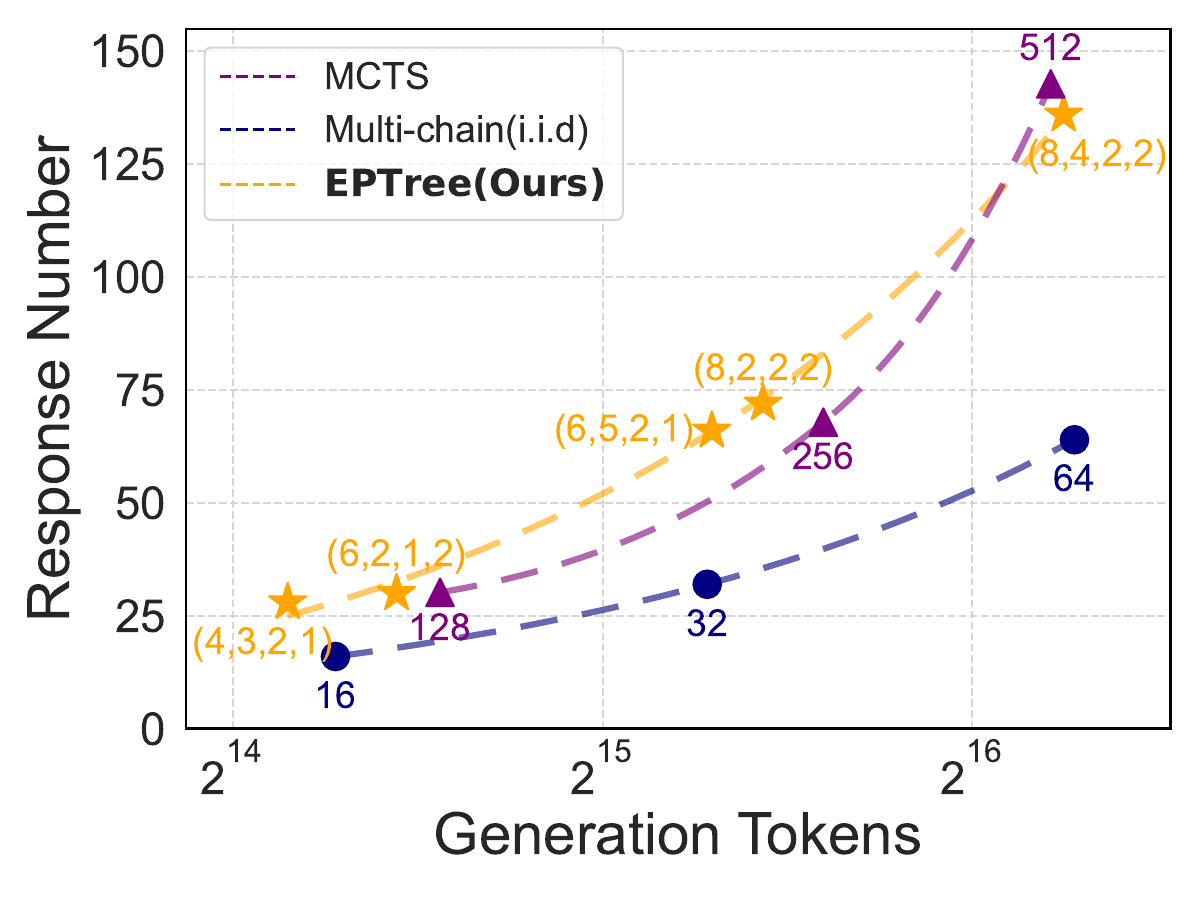}
        \label{fig:tree_vs_iid_leaf_num}
    \caption{Generation diversity comparison of \treemodel, MCTS, and i.i.d. multi-chain sampling. Both \treemodel and MCTS produce approximately $2\times$ different responses compared to i.i.d. multi-chain sampling. }
    \label{fig:tree_vs_mcts_iid_resp_num}
\end{figure}

\begin{algorithm*}[t]
    \caption{\model}
    \label{alg:entropy-guided-tree-search}
    \begin{algorithmic}{}
    \Statex \textbf{Input}: Prompt $\vx$, Policy $\pi_\theta$, Number of Trees $M$, Forking Points $N$, Iterations $L$, Branches $T$
    \Statex \textbf{Output}: Optimized Policy $\pi_{\theta}^*$
        \For{$i = 1$ to $M$} \Comment{\textcolor{blue}{{Create $M$ trees with $M(1+NLT)$ leaves}}}
            \State $Y^{(i)} \leftarrow \{\vy_i \sim \pi_\theta(\cdot \mid \vx)\}$, $\mathcal{T}_i \leftarrow \{Y^{(i)}\}$ \Comment{\textcolor{blue}{{Initialization}}}
            \For{$l = 1$ to $L$} \Comment{\textcolor{blue}{Iterative Expansion}}
                \State $H(\vy_t) \leftarrow -\log \pi_{\theta}(\vy_t \mid \vx, \vy_{<t}), \forall\ t \in \mathcal{T}_i$
                \State $B_{i, l} \leftarrow \text{Top-}N_{H(\cdot|\vx)}\left\{\left(t, H\left(\vy_t \mid \vx, \vy_{<t} \right)\right) \mid t \in \mathcal{T}_i\right\}$
                \For{each selected forking point $(t, \cdot) \in B_{i, l}$}
                    \State $Y_{\text{new}}^{(i,l)} \sim \{\pi_\theta\left(\cdot \mid \vx, \vy_{<t}\right), \ \text{for} \ (t,\cdot) \in B_{i,l}\}^{T}$
                    \State $\mathcal{T}_i \leftarrow \mathcal{T}_i \cup Y_{\text{new}}^{(i,l)}, j\in \{1,\cdots, T\}$
                \EndFor
            \EndFor
        \EndFor
        
        \For{each step $s_n$ in $\mathcal{T}_i$}  \Comment{\textcolor{blue}{Process Reward Calculation}}
            \State $V(s_n) \leftarrow \dfrac{1}{|L(s_n)|} \sum_{l \in L(s_n)} \mathbf{1}(l \text{ is correct})$
            \State $R(s_n) \leftarrow \underbrace{|L(s_n)|^{-1/2}}_{\text{Re-weight Factor}}\cdot [\underbrace{V(s_n) - V(\text{root})}_{\text{\textbf{G}lobal \textbf{A}dvantage}} + \underbrace{V(s_n) - V(p(s_n)}_{\text{\textbf{L}ocal \textbf{A}dvantage}}]$
        \EndFor
    \State Update $\pi_{\theta}$ using RL with process reward by Policy Gradient
    \end{algorithmic}
\end{algorithm*}

\subsection{Reinforcement Learning with \treemodel}
In this part, we show how to integrate \treemodel into RL training.
Beyond the superior potential to find correct trajectories, hierarchical tree structures also provide more fine-grained process supervision for intermediate steps for RL training.

\subsubsection{Process Supervision from Tree Search}
At each step, we estimate the value using Monte Carlo methods.
Specifically, for a step \( s_n \) (corresponding to a node in the tree), let \( L(s_n) \) denote the set of all leaf nodes that are descendants of node \( s_n \) (including node \( s_n \) itself if it is a leaf). The value \( V(s_n) \) of node \( s_n \) is computed as the ratio of correct leaf nodes among its descendants:
\[
    V(s_n) = \frac{1}{|L(s_n)|} \sum_{l \in L(s_n)} \mathbf{1}(l \text{ is correct})
\]
This value reflects the potential of the node \( s_n \) to lead to correct answers. Based on value, process supervision for each step is defined using advantages, i.e., how much a step is better than other steps, which include both \emph{global} and \emph{local advantages}.

The \emph{global advantage} of a step \( s_n \) represents its potential to lead to a correct outcome compared to the overall correctness ratio of all samples. Without loss of generality, assume that there exists a virtual root node for all subtrees, and the value of the root node $V(root)$ represents the average correctness of all generated responses. The global advantage is then computed as:
\begin{equation}
G_A(s_n) = V(s_n) - V(\text{root})
\label{eq:global_adv}
\end{equation}
Essentially, the global advantage of \(s_n\) is equivalent to the normalized value, which can be obtained by first normalizing the reward across all leaf nodes---subtracting the average reward---and then calculating the average rewards for \(L(s_n)\).

The \emph{local advantage} of a node \( s_n \) quantifies the improvement the step $s_n$ provides compared to its parent step $p(s_n)$
and is defined as:
\begin{equation}
L_A(s_n) = V(s_n) - V(p(s_n))
\label{eq:local_adv}
\end{equation}
where \( V(p(s_n)) \) denotes the value of the node \( p(s_n) \). \( L_A(s_n) > 0 \) indicates that step \( s_n \) is more likely to lead to the correct result than its parent node, suggesting that this step should be encouraged, and vice versa.

To compute the final reward for each step, we combine the global and local advantages as follows:
\begin{equation}
R(s_n) = G_A(s_n) + L_A(s_n)
\label{eq:process_reward}
\end{equation}


The definition could be viewed as a special case of Generalized Advantage Estimation (GAE)~\cite{schulman2015high}. The general form of GAE in LLM is defined as:
\begin{equation}
    A(s_n \rightarrow s_{n+t}) = \gamma^t V(s_{n+t}) - V(s_n) 
\end{equation}
where $t$ represents any integer time step, and $\gamma$ is typically set to 1 in most cases. Thus, the local advantage defined in Eq~\ref{eq:local_adv} corresponds to the case where $t=1$, while the global advantage corresponds to the case where $n=0$. 
Inspired by the GAE, the process supervision signal can be defined as a more generalized format by considering not only the root and direct parent but also all of its ancestor nodes in the trajectory:
\[
R_{GAE}(s_n) = \sum_{j \in P(s_n)} \lambda_j \cdot [V(s_n) - V(s_j)]
\]
where \( P(s_n) \) represents the set of ancestor nodes of \( s_n \) and $\lambda_j$ denotes the weight of each step. In this work, we focus on a special format that only considers the direct parent and the root node. 

\subsubsection{Training with Process Superivison}
At each iteration, we first utilize \treemodel described in \Cref{sec:entropy-tree} to generate $M$ trees \( \mathcal{T} = \{\mathcal{T}_i\}_i^{M} \) for the prompt, where each leaf node in \( \mathcal{T} \) together with its all prefix corresponds to a complete sequence. 
For each step in the tree, the process supervision signal is assigned based on \Cref{eq:process_reward} to reflect its importance.

The sequences \(\{ S_1, S_2, \ldots, S_{(M\times N \times L \times T + M)} \} \) extracted from the trees are used for RL training. As all non-leaf steps appear in multiple sequences and will be repeatedly computed in optimization, we downweight the reward of these steps to prevent overfitting. The reward for each non-leaf step is modified by dividing the square root of the number of leaf nodes in its subtree: \( R(s_n) = R(s_n) / \sqrt{|L(s_n)|} \). This adjustment leads to improved performance in our experiments. The overall pipeline of \model is illustrated in Alg~\ref{alg:entropy-guided-tree-search}.

\section{Experiment}

\begin{figure}[t]
    \centering
    \includegraphics[width=0.45\textwidth]{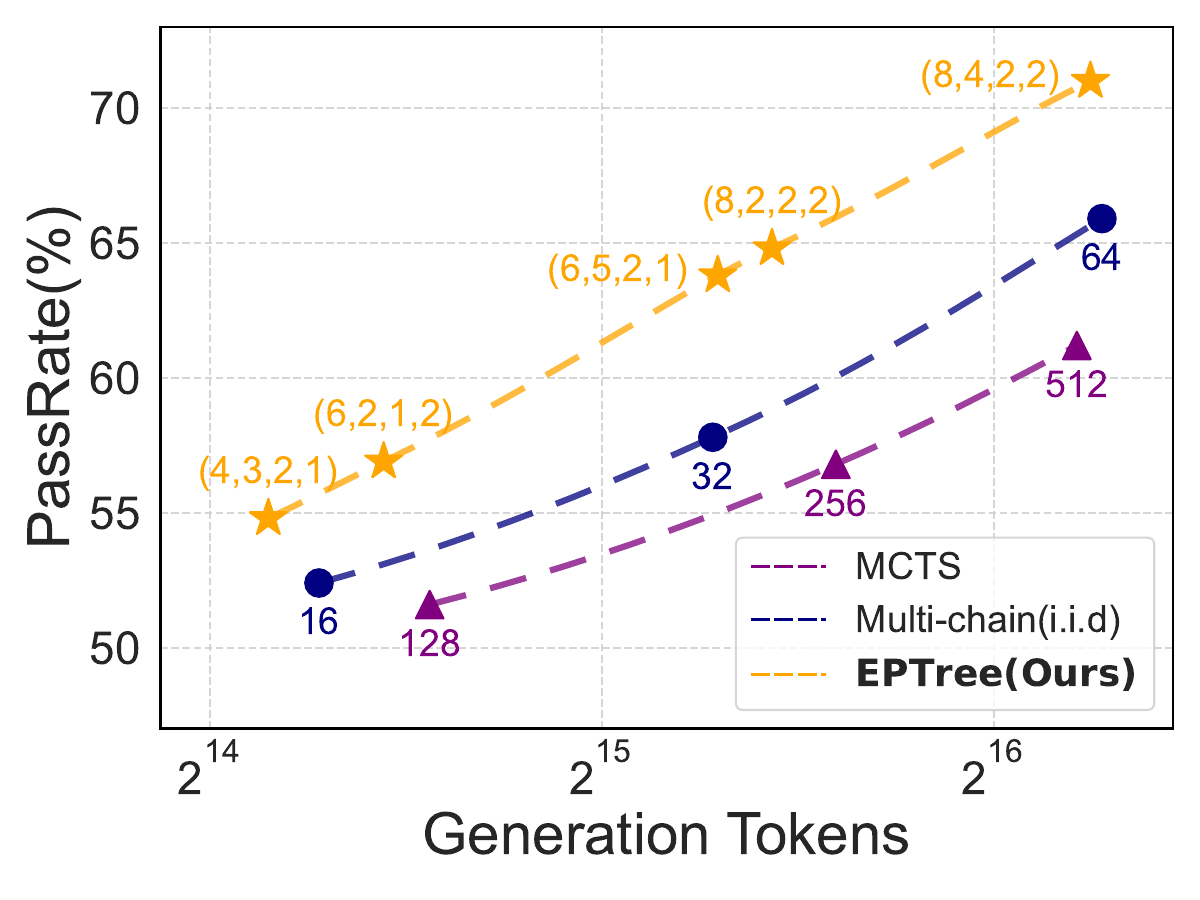}
    \label{fig:tree_vs_iid_performance}
    \caption{Search performance of \treemodel, MCTS, and multi-chain sampling on Omni-MATH-500 using Qwen-2.5-14B-SFT. \treemodel consistently outperforms all baselines under the different inference costs. 
    }
    \vspace{-4mm}
    \label{fig:tree_vs_mcts_iid}
\end{figure}

\begin{table*}[!t]
    \caption{Experiment results on math reasoning tasks. We report Accuracy(\%) for all datasets.}
    \centering
    \resizebox{0.95\textwidth}{!}{%
        \begin{tabular}{lccccccc}
        \toprule[1.2pt]
             & MATH500 & \makecell{Omni-MA\\TH-500} & AIME2024 & AMC & \makecell{Olympiad\\Bench} & \makecell{LiveCode\\Bench} & Avg \\
         \midrule
         GPT-4o & 76.6 & 26.8 & 9.3 & 45.8 & 43.3 & 29.5 & 38.6 \\
         Llama-3.1-8B-Instruct & 52.8 & 15.0 & 10.9 & 22.6 & 15.6 & 11.6 & 21.4 \\
         Llama-3.3-70B-Instruct & 73.9 & 27.9 & 24.2 & 50.9 & 35.7 & 25.5 & 39.7 \\
         GLM4-9B-chat & 50.1 & 12.9 & 1.7 & 17.2 & 14.7 & 16.5 & 18.9 \\
         Qwen-2.5-7B-Instruct & 76.5 & 26.0 & 13.3 & 41.9 & 35.0 & 16.8 & 34.9 \\
         Qwen-2.5-Math-7B-Instruct & 82.7 & 29.7 & 16.7 & 50.6 & 40.7 & 8.1 & 38.1 \\
         Qwen-2.5-14B-Instruct & 78.9 & 28.7 & 13.7 & 54.5 & 41.8 & 27.7 & 40.9 \\
         \midrule
         SFT (GLM-9B) & 56.0 & 18.2 & 8.3 & 29.2 & 22.5 & 14.2 & 24.7  \\
         ChainRL (GLM-9B)  & 63.0 & 21.8 & 6.1 & 31.6 & 23.9 & 16.6 & 27.2 \\
         \model (GLM-9B) & 64.5 & 20.8 & 11.4 & 38.5 & 24.8 & 15.8 & \textbf{29.3} \\
         \midrule
         SFT (Qwen-2.5-14B) & 76.6 & 29.5 & 10.6 & 48.0 & 36.9 & 14.5 & 36.0 \\
         ChainRL (Qwen-2.5-14B) & 81.6 & 32.7 & 22.2 & 53.9 & 41.1 & 18.2 & 41.6\\
         \model (Qwen-2.5-14B) & 81.7 & 36.7 & 28.0 & 55.9 & 44.6 & 20.8 & \textbf{44.5} \\
         \midrule
         SFT (R1-Distilled-Qwen-2.5-7B) & 94.0 & 47.8 & 55.9 & 85.5 & 54.4 & 43.9 & 63.6 \\
         ChainRL (R1-Distilled-Qwen-2.5-7B) & 93.6 & 48.1 & 59.7 & 85.5 & 54.5 & 46.1 & 64.5\\
         \model (R1-Distilled-Qwen-2.5-7B) & 94.4 & 49.8 & 60.8 & 85.0 & 57.1 & 47.4 & \textbf{65.8} \\
         \bottomrule[1.2pt]
        \end{tabular}
    }
    \label{tab:experiments}
\end{table*}

\begin{figure*}[!h]
    \centering
    \subfloat[Experiments on  Qwen-2.5-14B \label{subfig:a}]{
        \begin{minipage}{\textwidth}
            \centering
            \begin{minipage}{0.32\textwidth}
                \centering
                \includegraphics[width=\textwidth]{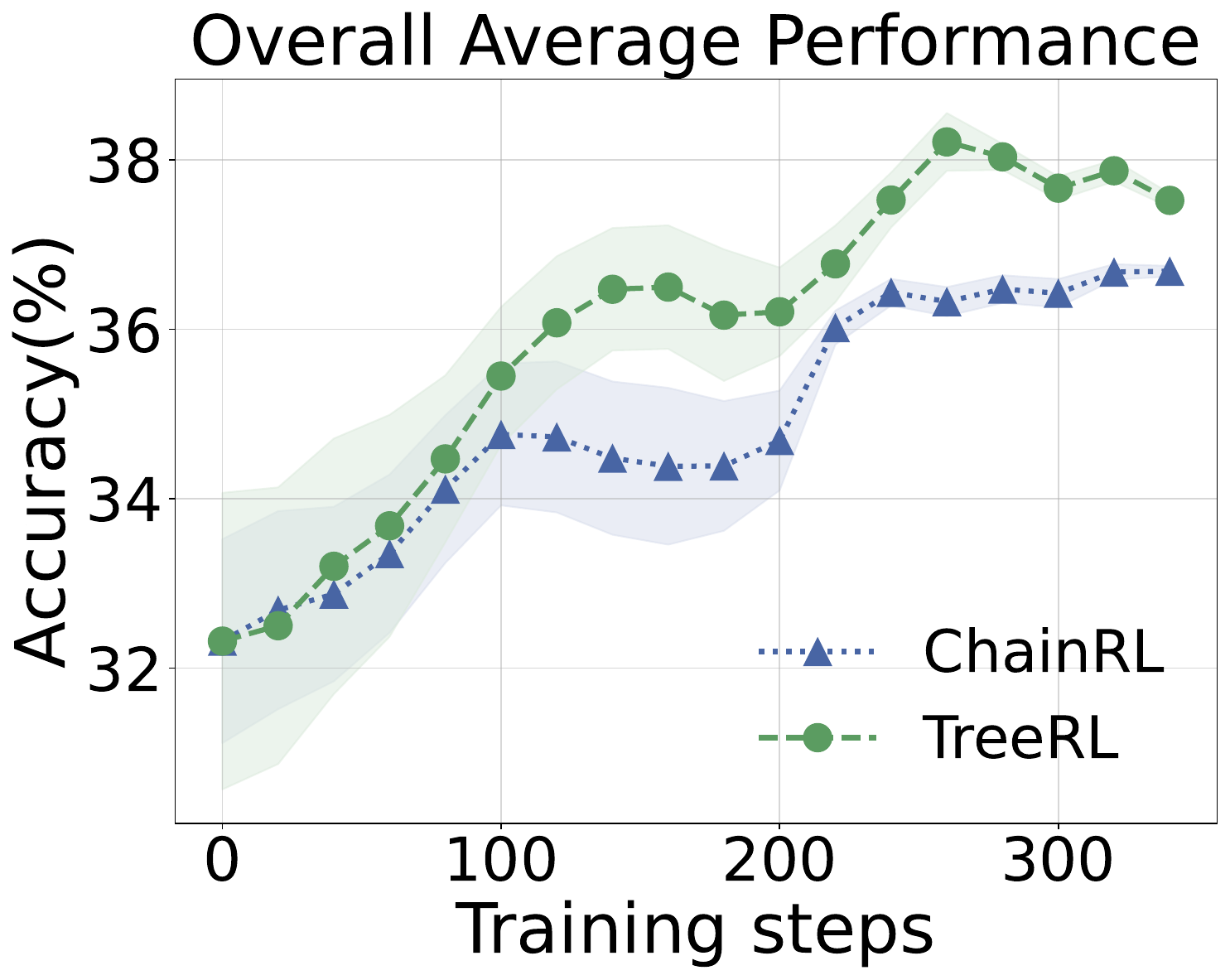}
                \vspace{-5mm}
                \label{fig:first_image}
            \end{minipage}
            \hfill
            \begin{minipage}{0.32\textwidth}
                \centering
                \includegraphics[width=\textwidth]{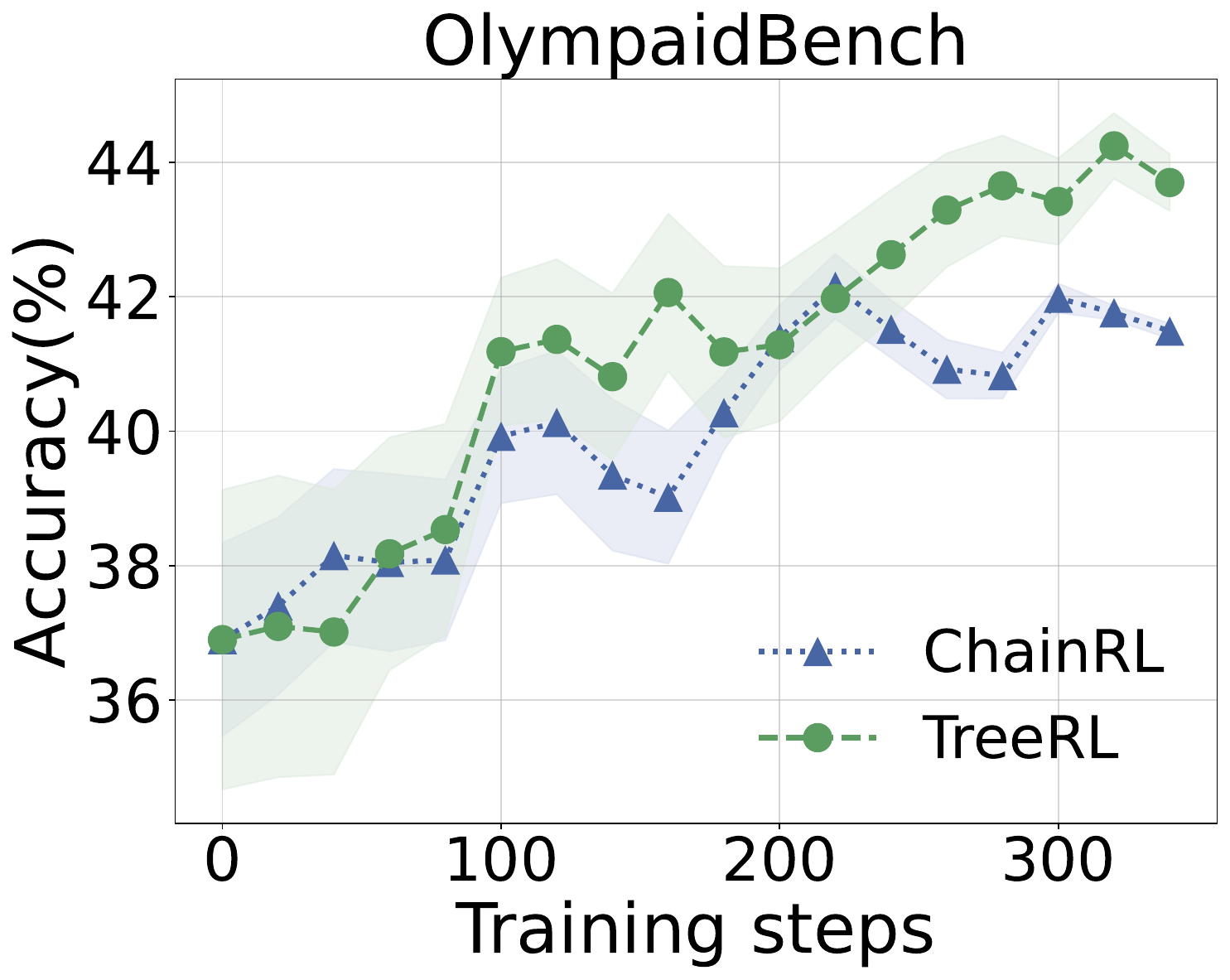}
                \vspace{-5mm}
                \label{fig:second_image}
            \end{minipage}
            \hfill
            \begin{minipage}{0.32\textwidth}
                \centering
                \includegraphics[width=\textwidth]{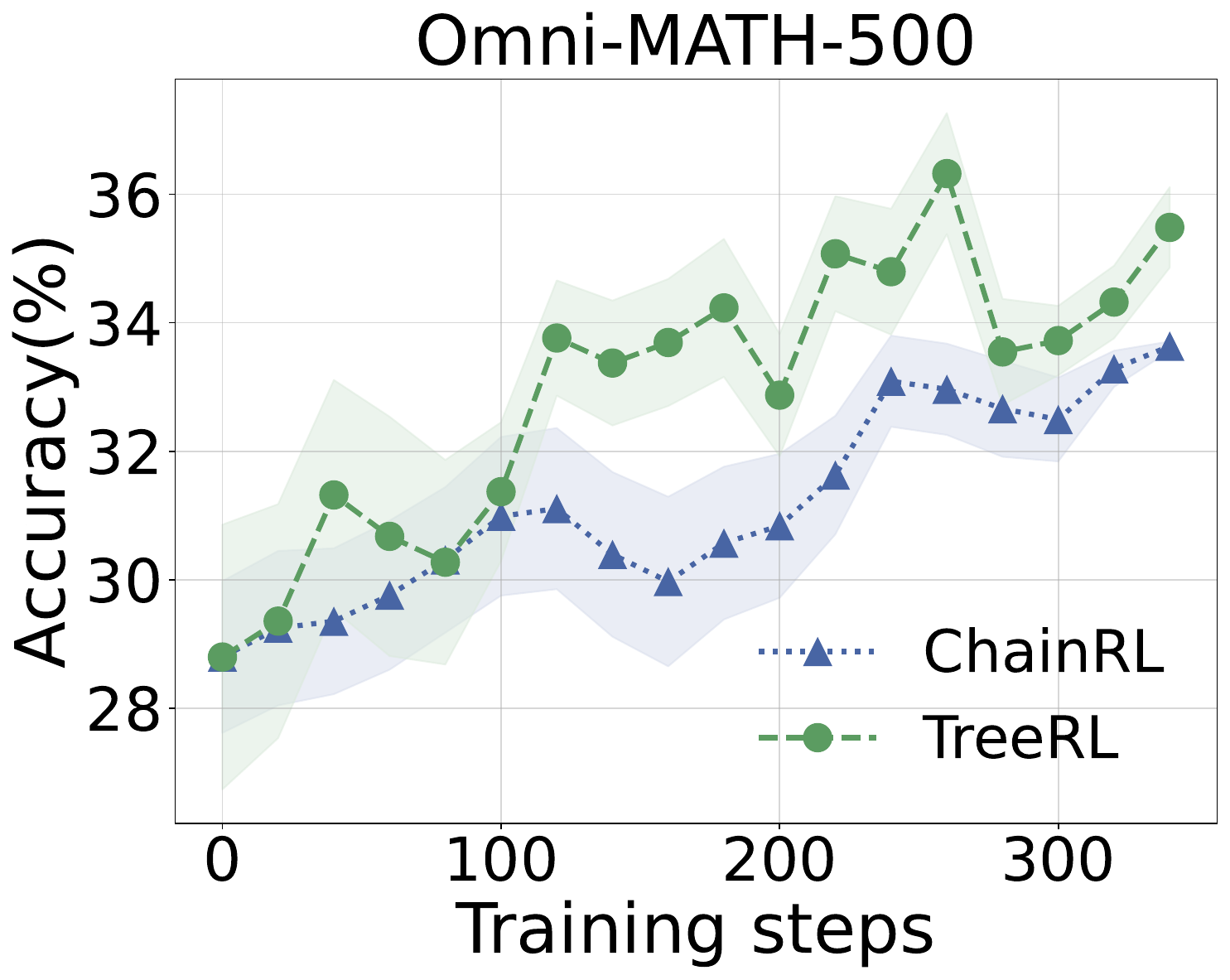}
                \vspace{-5mm}
                \label{fig:third_image}
            \end{minipage}
        \end{minipage}
    }
    \vspace{1mm}
    \subfloat[Experiments on GLM4-9B \label{subfig:b}]{
        \begin{minipage}{\textwidth}
            \centering
            \begin{minipage}{0.32\textwidth}
                \centering
                \includegraphics[width=\textwidth]{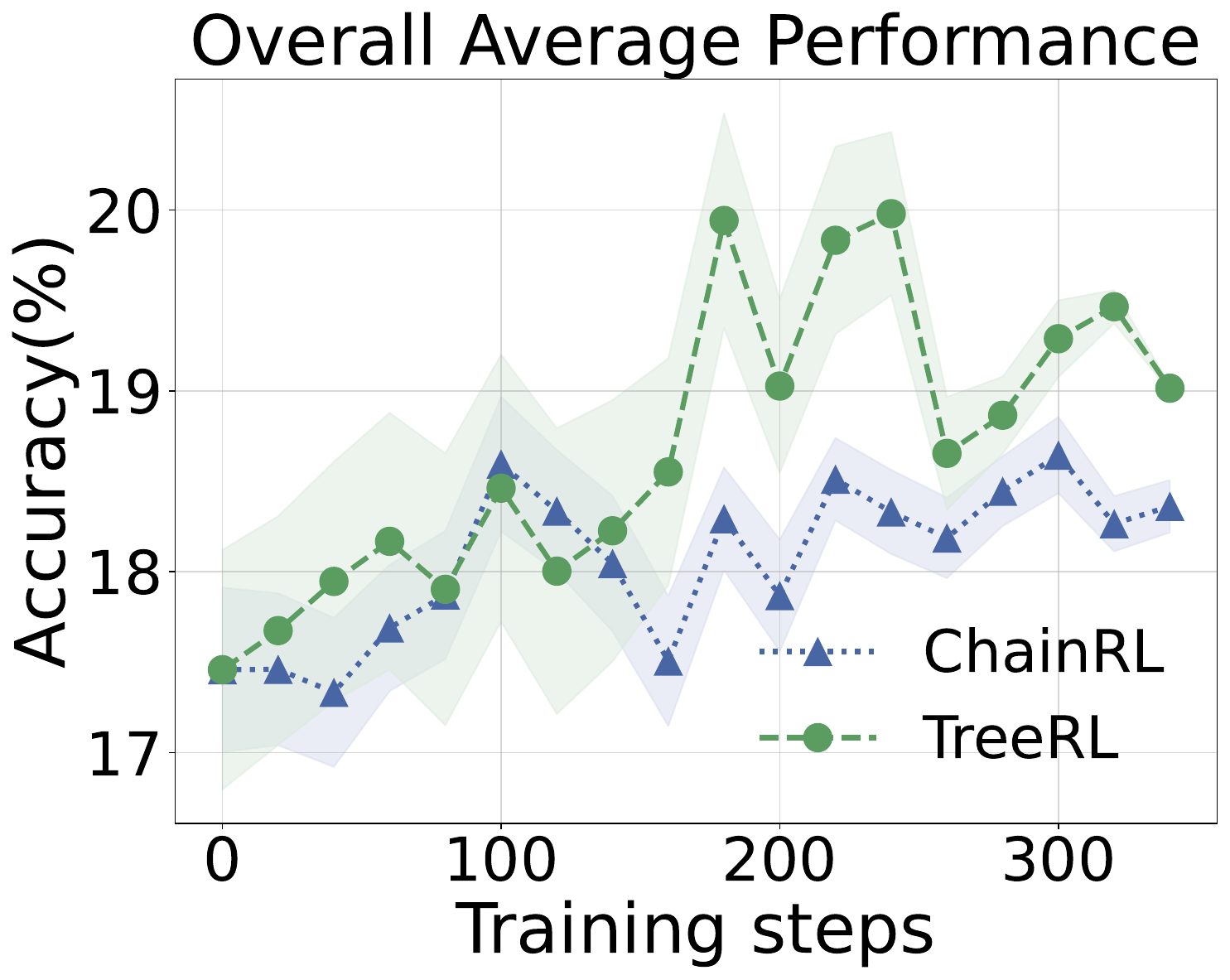}
                \vspace{-5mm}
                \label{fig:fourth_image}
            \end{minipage}
            \hfill
            \begin{minipage}{0.32\textwidth}
                \centering
                \includegraphics[width=\textwidth]{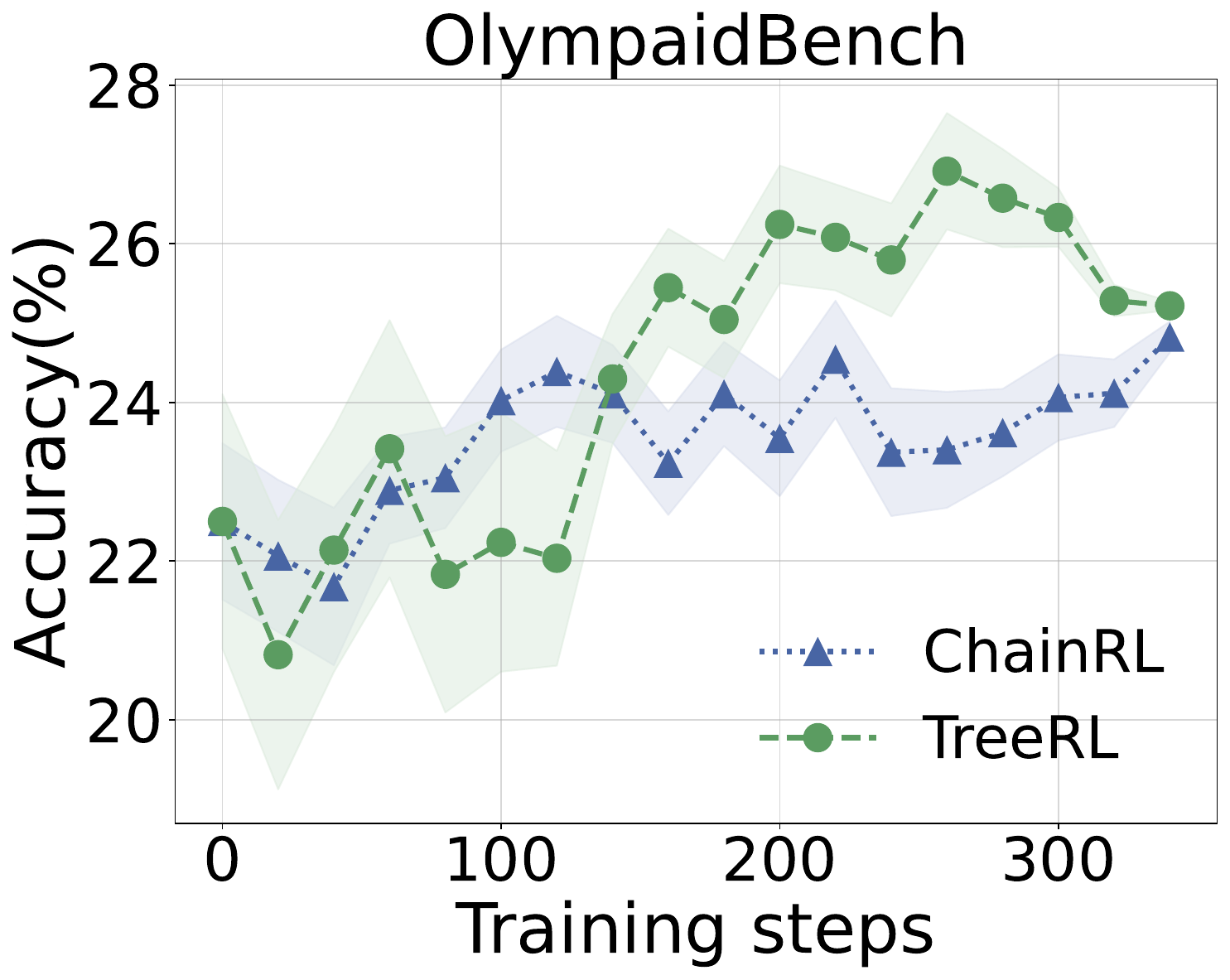}
                \vspace{-5mm}
                \label{fig:fifth_image}
            \end{minipage}
            \hfill
            \begin{minipage}{0.32\textwidth}
                \centering
                \includegraphics[width=\textwidth]{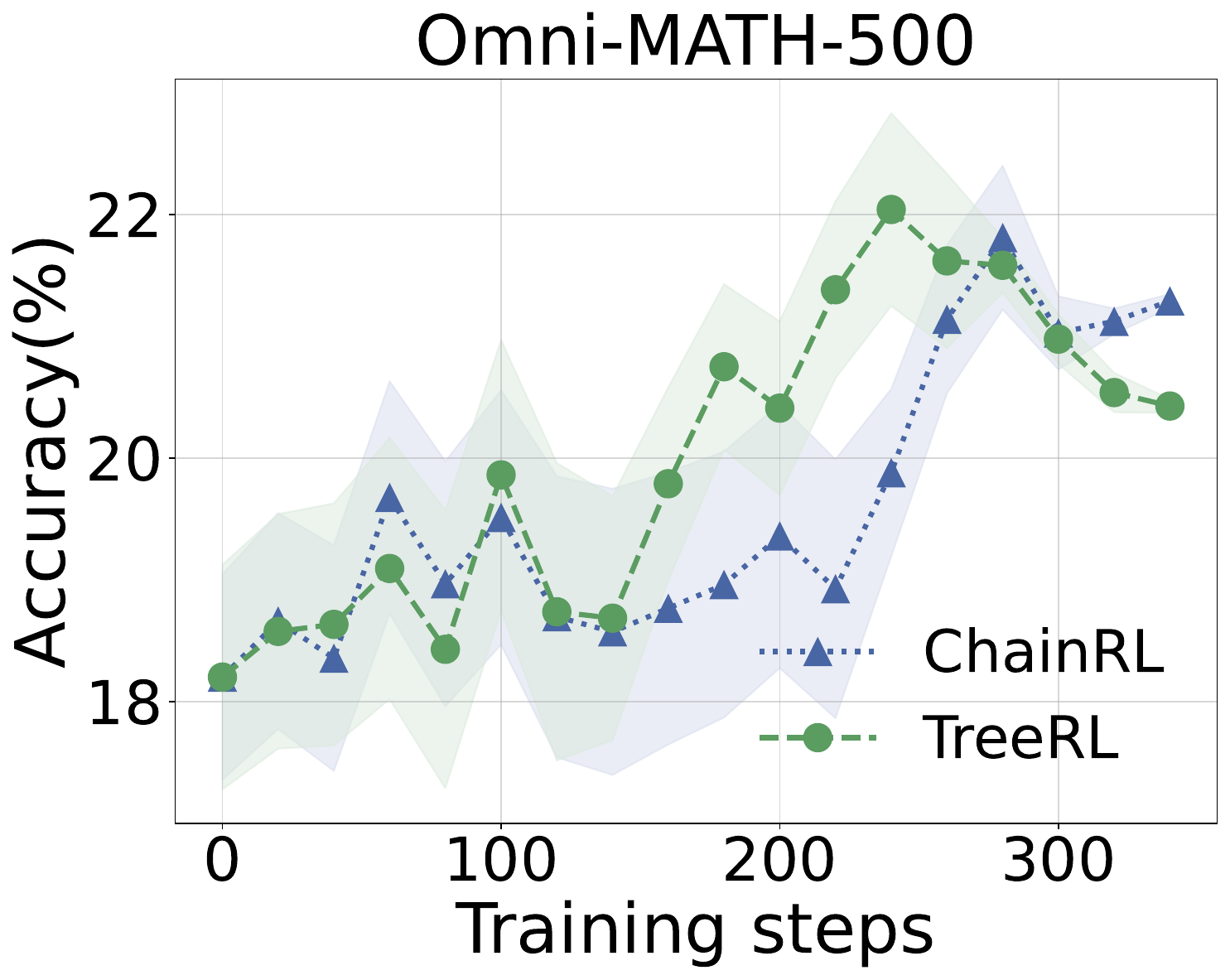}
                \vspace{-5mm}
                \label{fig:sixth_image}
            \end{minipage}
        \end{minipage}
    }
    \caption{Performance comparison between \model and ChainRL during training. We report the average performance across all six datasets (Left), OlympiadBench (Middle), and Omni-MATH-500 (Right). The results for the other benchmarks can be found in the \Cref{sec:appendix:other_benchmark}.}
    \label{fig: RL-results}
\end{figure*}

\subsection{Setup}
\label{sec:exp_setup}
\para{Training Details.} 
We train SFT models based on Qwen-2.5-14B~\citep{qwen2_5} and GLM4-9B~\citep{glm2024chatglm} as initialization for RL training and finetune them using the public dataset from \citep{hou2025advancing} for 2 epochs. 
For DeepSeek-R1-distilled-Qwen-2.5-7B, we directly use the open-source model from~\cite{guo2025deepseek} without further finetuning.
Then, to identify the best tree search setting for RL under a given inference cost, we investigate various combinations of hyperparameters $(M, N, L, T)$ using the trained Qwen-2.5-14B-SFT model on the Omni-MATH-500 dataset with PassRate as the target metric, and the results can be found in Table~\ref{tab:tree-parameter-evaluation} in the Appendix. 
The baseline i.i.d multi-chain samplings correspond to setting $N = L = T = 0$.

For the RL training, we compare the performance of RL using multi-chain sampling, referred to as ChainRL, to the proposed \model under the same inference cost derived from the previous search to ensure a fair comparison. 
For \model, we used sampling parameters $(M, N, L, T) = (6, 2, 1, 2)$, generating 30 responses per prompt. This is comparable to multi-chain sampling with 16 responses per prompt, given similar generation token budgets.  Each iteration used 16 prompts for a single gradient update, resulting in a batch size of 256 for ChainRL and 480 for \model. Therefore, \model actually performs reinforcement learning with the same inference cost but with more training computation during training. 
We use rule-based reward based on the correctness of the final answer, i.e., $+1$ for correct and $0$ for wrong.
The KL efficiency is set to $\beta = 10^{-4}$, with a learning rate of $1.5 \times 10^{-6}$. During sampling, the temperature is 1.2, the top-$p$ value is 0.95, and the maximum sequence length is 8,192.
For RL training, the training data all come from publicly available datasets, including MATH-train~\citep{hendrycksmath2021} and NuminaMath~\citep{li2024numinamath}, and we sample a subset for training.

\para{Evaluation}
We use PassRate to evaluate the effectiveness of the proposed tree search algorithm. PassRate measures the potential of a model to reach at least one correct answer among all the generated solutions. 
It should be noted that we compare tree sampling and chain sampling under similar inference budgets, i.e., generation tokens.
Specifically, given an evaluation dataset $\mathcal{D}$, for each sample $d \in \mathcal{D}$, let $\mathcal{L}_d$ be the set of generated responses and $c(l)\in \{0,1\}$ be a binary indicator of correctness for a response $l$. The PassRate is computed as:
\[
\text{PassRate} = \frac{1}{|\mathcal{D}|} \sum_{d \in \mathcal{D}} \max_{l \in \mathcal{L}_d} c(l)
\]
For reinforcement learning, we evaluate the policy model on 6 challenging reasoning benchmarks using greedy sampling, 
including MATH\citep{hendrycksmath2021}, Omni-MATH \citep{gao2024omni}, AIME2024\citep{li2024numinamath}, AMC\citep{li2024numinamath}, OlympiadBench\citep{he2024olympiadbench}, and LiveCodeBench~\citep{jain2024livecodebench}. For the MATH dataset, we use a subset known as MATH500 based on the split defined by \citep{lightmanlet}. For Omni-MATH, we randomly sample a subset for evaluation, named Omni-MATH-500, which consists of 500 examples for efficient yet comprehensive assessment. Each dataset undergoes multiple evaluations to minimize variance; for instance, we evaluate MATH500 and Omni-MATH-500 three times each, while AIME is evaluated 32 times, considering AIME2024 consists of only 30 questions. For LiveCodeBench, we report the performance using the data between 202407 to 202411.


\subsection{Results of \treemodel}
\Cref{tab:tree-parameter-evaluation} shows the overall performance on Omni-MATH-500 and illustrates the efficiency and effectiveness trade-off across different sampling methods. 
\treemodel demonstrates a significant advantage over the baseline multi-chain sampling method and the MCTS method under the same inference cost. And \treemodel shows consistently better performance across different generation costs and outperforms the multi-chain sampling by around 3\% in PassRate. Compared to MCTS, \treemodel demonstrates a more significant advantage, with the margin widening as the inference cost increases.

\subsection{Results on RL training}
Table~\ref{tab:experiments} presents the performance of various sampling strategies in RL training. Notably, \model equipped with \treemodel sampling outperforms traditional multi-chain sampling across different benchmarks. In particular, Figure~\ref{fig: RL-results} illustrates the evaluation performance over various training steps. While both sampling strategies exhibit similar performance during the early stages of training, \model begins to show an advantage around 100 steps and continues to improve consistently. This indicates that the \model achieves better prompt efficiency by delivering enhanced performance with the same number of training prompts. Overall, these results underscore the promise of integrating tree search and process supervision into RL training. 
It is observed that both RL methods show only minor improvements over the SFT baseline. This could be due to the fact that we selected the RL training data based on Qwen-2.5-14B, which is likely much easier for the R1-Distilled-Qwen-2.5-7B model, thereby limiting the potential performance gains in RL.

\subsection{Ablation Study on \treemodel Sampling}
\vpara{Effects of entropy-based forking.} Table~\ref{tab:entropy-fork-token-ablation} illustrates the performance across different strategies when forking new branches in tree expansion. 
\treemodel shows better PassRate than random forking with fewer generation tokens, which demonstrates the advantage of \treemodel. In addition, both tree sampling strategies outperform multi-chain sampling, offering the potential of tree search.

\begin{table}[t]
    \centering
    \caption{Ablation of \treemodel on Omni-MATH-500. We compare forking branches using random and entropy-based strategies on Qwen-2.5-14B-SFT. $(16,0,0,0)$ corresponds to multi-chain sampling. \textit{Entropy} denotes whether to use entropy-based forking strategy. \treemodel model shows better PassRate performance yet with fewer generation tokens.}
    \resizebox{0.5\textwidth}{!}{
        \begin{tabular}{cccc}
        \toprule[1.2pt]
        $(M,N,L,T)$ & Entropy & PassRate $\uparrow$ & \#Token $\downarrow$ \\
        \midrule
        $(6,2,1,2)$ & $\checkmark$ & \textbf{56.9} & \textbf{22268} \\
        $(6,2,1,2)$ & \ding{55} & 54.8 & 24213 \\
        $(16,0,0,0)$ & - & 52.4 & 19858 \\
        \midrule
        $(8,4,2,2)$ & $\checkmark$ & \textbf{71.0} & \textbf{77768} \\
        $(8,4,2,2)$ & \ding{55} & 70.0 & 89452 \\
        $(64,0,0,0)$ & - & 67.4 & 79367 \\
        \bottomrule[1.2pt]
        \end{tabular}
    }
    \label{tab:entropy-fork-token-ablation}
\end{table}

\vpara{Case study on forking tokens.} 
To help better understand \treemodel, we conduct case studies to analyze what type of tokens tend to be selected.
We examine frequency by identifying the top-10 tokens that most often serve as forking points, and the results are illustrated in \Cref{fig:top_10_tokens}. It can be observed that mathematical operators ($\backslash($), logical conjunctions, and transitional terms (``Since'', ``But'') are frequently selected. Notably, the token ``wait'' appears frequently, as o1-like models often use it for self-reflection steps. 

\begin{figure}[t]
    \centering
    \includegraphics[width=0.95\linewidth]{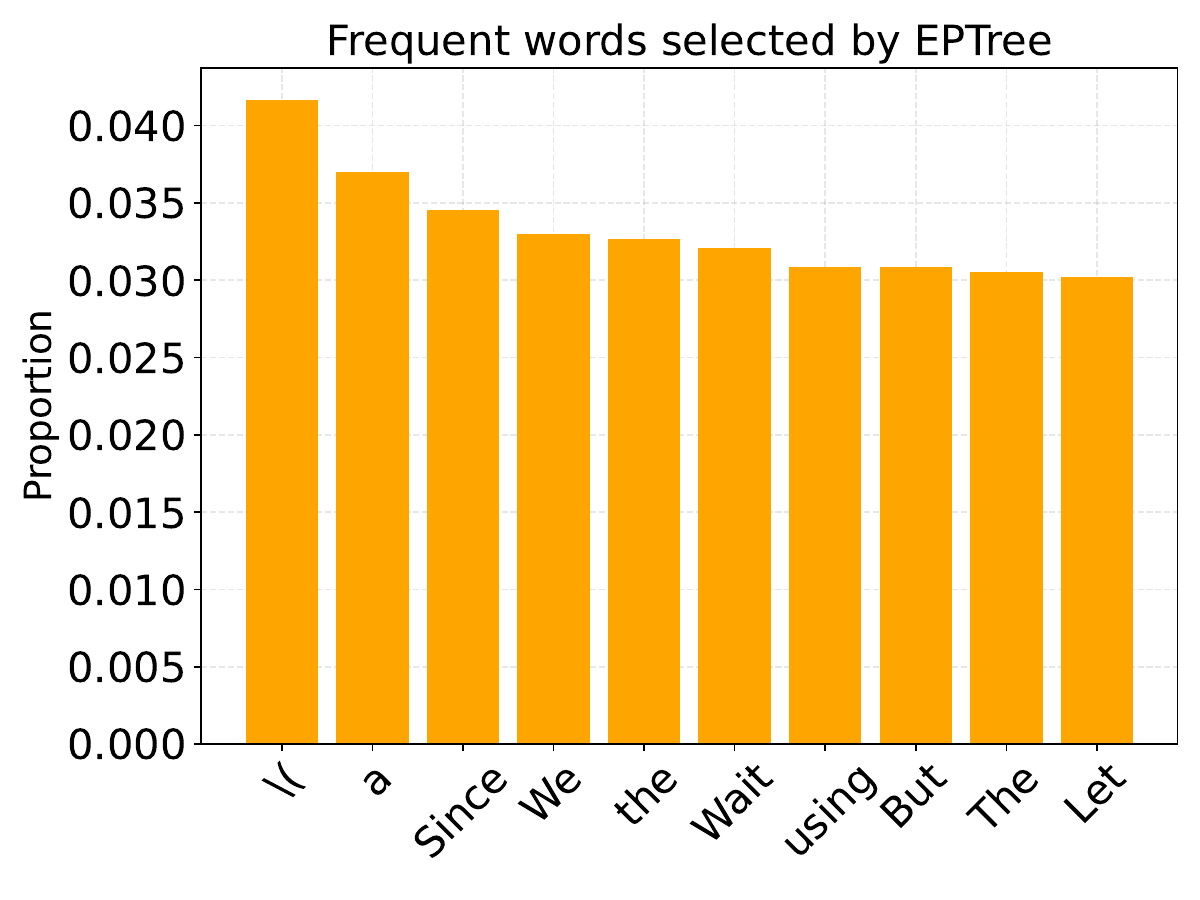}
    \caption{Frequent words of sampled forking tokens in \treemodel sampling on Omni-MATH-500.}
    \label{fig:top_10_tokens}
\end{figure}

\hide{
\begin{table*}[ht]
    \caption{Ablation on the effectiveness of different process signals for RL training using Qwen2.5-14B. $G_A$ and $L_A$ denote the global and local advantages, respectively. $n$ refers to the number of leaf nodes that can be reached from the given step. \#Response denotes the number of responses used for training.}
    \centering
    \resizebox{\textwidth}{!}{%
    \begin{tabular}{cc|cccccc|c}
    \toprule[1.2pt]
       Reward  & \#Responses & MATH500 & \makecell{Omni-MA\\TH-500} & AIME2024 & AMC & \makecell{Olympiad\\Bench} & LiveCodeBench & Avg \\
    \midrule
    $(G_A + L_A)/\sqrt{n}$ & 30  & \textbf{81.7} & \textbf{36.7} & \textbf{28.0} & \textbf{55.9} &  \textbf{44.6} & \textbf{20.8} &\textbf{44.5} \\
    $G_A+L_A$ & 30 & 81.5 & 32.0 & 24.1 & 56.2 & 42.1 & & \\
    $G_A$ & 30 &  80.1 & 35.1 & 24.7 & 55.5 & 42.8 & 20.7 & 43.2 \\
    $(G_A + L_A)/\sqrt{n}$ & 16 & 80.1 & 32.5 & 24.5 & 52.9 & 41.7 & 15.8 & 41.3 \\
     \bottomrule[1.2pt]
    \end{tabular}%
    }
    \label{tab:ablation_rl}
\end{table*}
}

\begin{table*}[ht]
    \caption{Ablation on the effectiveness of different process signals for RL training using Qwen2.5-14B. $G_A$ and $L_A$ denote the global and local advantages, respectively. $n$ refers to the number of leaf nodes that can be reached from the given step. \textit{\#Response} denotes the number of responses used for training.}
    \centering
    \resizebox{\textwidth}{!}{%
    \begin{tabular}{cc|cccccc|c}
    \toprule[1.2pt]
       Reward  & \#Responses & MATH500 & \makecell{Omni-MA\\TH-500} & AIME2024 & AMC & \makecell{Olympiad\\Bench} & \makecell{LiveCode\\Bench} & Avg Gain \\
    \midrule
    $(G_A + L_A)/\sqrt{n}$ & 30  & \textbf{81.7} & \textbf{36.7} & \textbf{28.0} & 55.9 & \textbf{44.6} & \textbf{20.8} & - \\
    $G_A+L_A$ & 30 & 81.5 & 32.0 & 24.1 & \textbf{56.2} & 42.1 & 19.7 & -1.9 $\downarrow$ \\
    $G_A / \sqrt{n}$ & 30 &  80.1 & 35.1 & 24.7 & 55.5 & 42.8 & 20.7 & -1.3 $\downarrow$ \\
    $(G_A + L_A)/\sqrt{n}$ & 16 & 80.1 & 32.5 & 24.5 & 52.9 & 41.7 & 15.8 & -3.2 $\downarrow$ \\
     \bottomrule[1.2pt]
    \end{tabular}%
    }
    \label{tab:ablation_rl}
\end{table*}


\subsection{Ablation on \model}
To test the effectiveness of process supervision in \model, we compare different designs of process supervision signals. Additionally, since the \model uses more traces than RL with multi-chain sampling, we also evaluate how the increased training cost impacts performance. The results are presented in Table~\ref{tab:ablation_rl}.
The results demonstrate that RL with reweighted local and global advantage achieves the best performance. Removing either component could lead to a decline in performance. Moreover, using a subset of sampled responses shows less improvement compared to utilizing the entire set. This suggests that, in addition to higher PassRate and process supervision, tree search enhances RL performance by enabling more training within the same inference cost.


\section{Related Work}

\subsection{RL for LLM Reasoning}
Recent advanced~\citep{guo2025deepseek,zhu2024deepseek,openaio1,ouyang2022training} have demonstrated the effectiveness of reinforcement learning in LLM alignment and reasoning. Most existing methods train a reward model or use rule-based rewards for the entire response. In parallel, process supervision\citep{lightman2023letsverifystepstep} has shown a particularly promising performance than outcome supervision. 
Most existing works resort to training a process reward model (PRM) based on human- or auto-annotated process signals\citep{lightman2023letsverifystepstep, wang2024mathshepherdverifyreinforcellms,luo2024improvemathematicalreasoninglanguage,setlur2024rewardingprogressscalingautomated} and apply the static PRM for RL training~\citep{shao2024deepseekmathpushinglimitsmathematical,wang2024mathshepherdverifyreinforcellms}. However, a static PRM could suffer from distribution shift and reward hacking as RL training progresses. \citep{kazemnejad2024vineppounlockingrlpotential} overcomes this problem by conducting Monte Carlo rollouts to estimate the value for each step, but it suffers from high computation cost with around quadratic computational complexity and thus hinders scalability.

\subsection{Tree Search for LLM Reasoning}
Tree search has mainly been explored in LLM alignment and inference, especially for data synthesis and offline preference training.
\citet{xie2024monte} employs MCTS to generate step-level preference pairs for DPO~\citep{rafailov2024direc}. ~\citet{chen2024alphamathzeroprocesssupervision,feng2024alphazeroliketreesearchguidelarge,zhang2024restmctsllmselftrainingprocess} employ MCTS to iteratively generate high-quality SFT data or produce process supervision signals for training. Other works explore reward-guided search~\citep{snell2024scalingllmtesttimecompute,yao2023treethoughtsdeliberateproblem,long2023largelanguagemodelguided} to boost the performance in the inference stage.  
However, few efforts have been devoted to studying how to explicitly integrate tree search into reinforcement learning training like AlphaZero~\citep{silver2017mastering} to improve LLM reasoning.

\hide{
\subsection{Tree Search for LLMs}
The integration of tree search methods into large language models has proven advantageous for both inference and training. 

\vpara{Inference Time Tree Search}Several studies have demonstrated the efficacy of tree search techniques at test-time. Notably, works such as \citep{long2023largelanguagemodelguided} and \cite{yao2023treethoughtsdeliberateproblem}, which implement Tree-of-Thought methods with depth-first and breadth-first searches, and \citep{hao2023reasoninglanguagemodelplanning}, which employs Monte Carlo Tree Search (MCTS), have shown that tree search at inference time, complemented by self-evaluation, enhances decision-making by exploring multiple reasoning paths. Furthermore, the combination of large language models with process reward-guided tree search has been shown to improve complex reasoning capabilities \cite{feng2024alphazeroliketreesearchguidelarge, park2024ensemblinglargelanguagemodels}. Additionally,\cite{snell2024scalingllmtesttimecompute} illustrates that reward model-guided inference-time tree search methods, such as beam search and lookahead search, serve as efficient test-time scaling techniques, often outperforming parameter scaling methods.

\vpara{Tree Search As Training Supervision Signal}Recent studies have demonstrated that by employing tree search mechanisms, researchers have successfully obtained high-quality supervisory signals that significantly enhance the performance of language models during the subsequent training phases. For instance, \citep{xie2024montecarlotreesearch} utilizes MCTS to generate step-level preference pairs, which serve as high-quality supervisory signals for preference learning in the Decision Process Optimization (DPO) stage. Additionally, several studies have employed tree searches to train policy-value models iteratively. In these frameworks, the policy model learns exclusively from high-quality sequences generated by MCTS, while the value model is trained on both positive and negative examples \citep{chen2024alphamathzeroprocesssupervision,feng2024alphazeroliketreesearchguidelarge,zhang2024restmctsllmselftrainingprocess}. These tree search methods have emerged as effective self-training techniques that substantially reduce annotation costs by autonomously generating high-quality supervisory signals. Collectively, these studies underscore the potential of tree search algorithms in improving the quality of training data for large language models (LLMs). Nonetheless, none have explicitly applied tree search within the context of reinforcement learning (RL) training procedures.

\subsection{Process Supervision}
Recent advancements in process supervision have yielded promising results, with process reward models (PRMs) demonstrating superior performance over traditional outcome reward models (ORMs) in various settings. This includes Best-of-N (BON) performance \citep{lightman2023letsverifystepstep, wang2024mathshepherdverifyreinforcellms}) and the ability of providing RL guidance \citep{wang2024mathshepherdverifyreinforcellms}. Despite the substantial utility of PRMs, their implementation incurs significant annotation costs, as each step's correctness must be meticulously labeled. To mitigate this issue, automated labeling methods have been developed. For example, \citep{luo2024improvemathematicalreasoninglanguage} and \citep{wang2024mathshepherdverifyreinforcellms} present approaches that estimate the value of current steps based on their probability of leading to future success. Other studies explore integrating process supervision signals into RL training. \citep{shao2024deepseekmathpushinglimitsmathematical} employs PRMs as the process supervision signal in RL, achieving marginal performance gains over ORMs. Additionally, \citep{setlur2024rewardingprogressscalingautomated} combines advantage estimation, a progress measurement, with value approximation to boost both test-time search and RL. To address the issue of inaccurate credit assignment by PRMs, \citep{kazemnejad2024vineppounlockingrlpotential} proposes using unbiased Monte Carlo-based estimates as process supervision signals instead of PRMs. However, this method faces efficiency and scalability challenges due to discarded rollout segments.
}

\hide{
REINFORCE is an essential approach for optimizing language model performance. The process involves sampling multiple sequences from the policy $\pi_{\theta}(y|x)$, where $x$ is the input context and $y$ is the generated sequence. Each sequence is then assigned a reward $R(y, x)$ to evaluate its quality, with a baseline $b$ subtracted to reduce variance in gradient estimates. The goal is to maximize the expected reward while incorporating a Kullback-Leibler (KL) divergence term to maintain proximity to a reference policy $\pi_{\text{ref}}(.|x)$. The combined optimization objective is given by:

\begin{equation}
\max_{\pi_{\theta}} \mathbb{E}_{x \sim \mathcal{D}, y \sim \pi_{\theta}(.|x)} \left[R_{\phi}(y, x) - \beta D_{\text{KL}}(\pi_{\theta}(.|x) \| \pi_{\text{ref}}(.|x))\right]
\end{equation}

The policy parameters $\theta$ are updated to maximize this objective, therefore enhancing positive outcomes and mitigating negative ones.
}

\hide{REINFORCE stands as a critical approach for optimizing model performance. 
The quintessential steps of the REINFORCE algorithm applied to language models involve the following pipeline:

\vpara{Sampling}
Initially, independent and identically distributed(i.i.d.) multiple samples are generated from the policy $\pi_{\theta}(y|x)$, where $y$ denotes the generated sequence and $x$ represents the input context. Sampling is critical in exploring the action space for diverse possible outputs.

\vpara{Reward Attribution}
Upon obtaining the generated samples, each sequence $y$ is associated with a reward $R(y, x)$ to measure the quality or correctness of the generated sequence.To reduce the variance of the gradient estimates without introducing bias, a baseline $b$ is subtracted from the reward. A common choice for this baseline is the average of all rewards corresponding to the same prompt.

\vpara{Optimization Objective}
The core goal is to maximize the expected reward while incorporating a Kullback-Leibler (KL) divergence term to ensure the trained policy $\pi_{\theta}(y|x)$ remains close to a reference policy $\pi_{\text{ref}}(.|x)$. The combined optimization objective thus becomes:

\begin{equation}
\max_{\pi_{\theta}} \underset {x \sim D, y \sim \pi_{\theta}(.|x)}{\mathbb{E}} \left[R_{\phi}(y, x) - \beta D_{\text{KL}}(\pi_{\theta}(.|x) \| \pi_{\text{ref}}(.|x))\right]
\end{equation}

Here, $r_{\phi}(x, y)$ represents the reward function, and $\beta$ is a weighting factor for the KL divergence term. 

\vpara{Policy Gradient Update}

After computing the reward and adjusting for the baseline, the policy parameters $\theta$ are updated by maximizing the optimization objective above. This ensures that the learning process amplifies positive outcomes while mitigating the impact of negative ones.
}
\section{Conclusion}
This work presents \model, an RL approach that combines tree search with process supervision to enhance LLM reasoning. \treemodel improves response diversity and performance over traditional methods like MCTS and i.i.d multi-chain sampling. Then, we conduct reinforcement learning with \treemodel and the derived process supervision from tree search. 
Experiments on math reasoning tasks show that the \model outperforms existing techniques, highlighting the potential of RL with tree search to advance LLM in complex reasoning tasks.

\section{Limitation}
In this work, we propose to improve reinforcement learning with on-policy tree search. While this approach demonstrates promising performance, it does come with several limitations. First, current LLM inference engines do not offer special optimizations for tree search, meaning the proposed \treemodel still requires 2+ iterations, resulting in a performance that is approximately 2× slower than multi-chain sampling. Additionally, we utilize process supervision from tree search for RL training and attempt to optimize it from the perspective of advantage and re-weighting.
Further efforts, including how to assign appropriate weights to the importance of different steps, how to define more meaningful process signals from the tree structure, and how to implement step-level reward normalization, deserve more exploration.

\vpara{Acknowledgment.} This work has been in part supported by the Natural Science Foundation of China (NSFC) 62495063, Tsinghua University (Department of Computer Science and Technology) - Siemens Ltd., China Joint Research Center for Industrial Intelligence and Internet of Things (JCIIOT), and the New Cornerstone Science Foundation through the XPLORER PRIZE.
The corresponding author: Yuxiao Dong (yuxiaod@tsinghua.edu.cn).




\bibliography{reference}

\appendix


\newpage

\section{Position Distribution Analysis}

We study the position of selected forking tokens in their branches. As illustrated in \Cref{fig:forked_token_position_ratio_histogram}, we plot the relative positions of forking points, calculated as the ratio between a token's forking position and its branch length. The resulting distribution shows a roughly uniform pattern, which supports our assumptions presented in \Cref{sec:appendix:proof}.

\begin{figure}[htbp]
    \centering
    \includegraphics[width=0.9\linewidth]{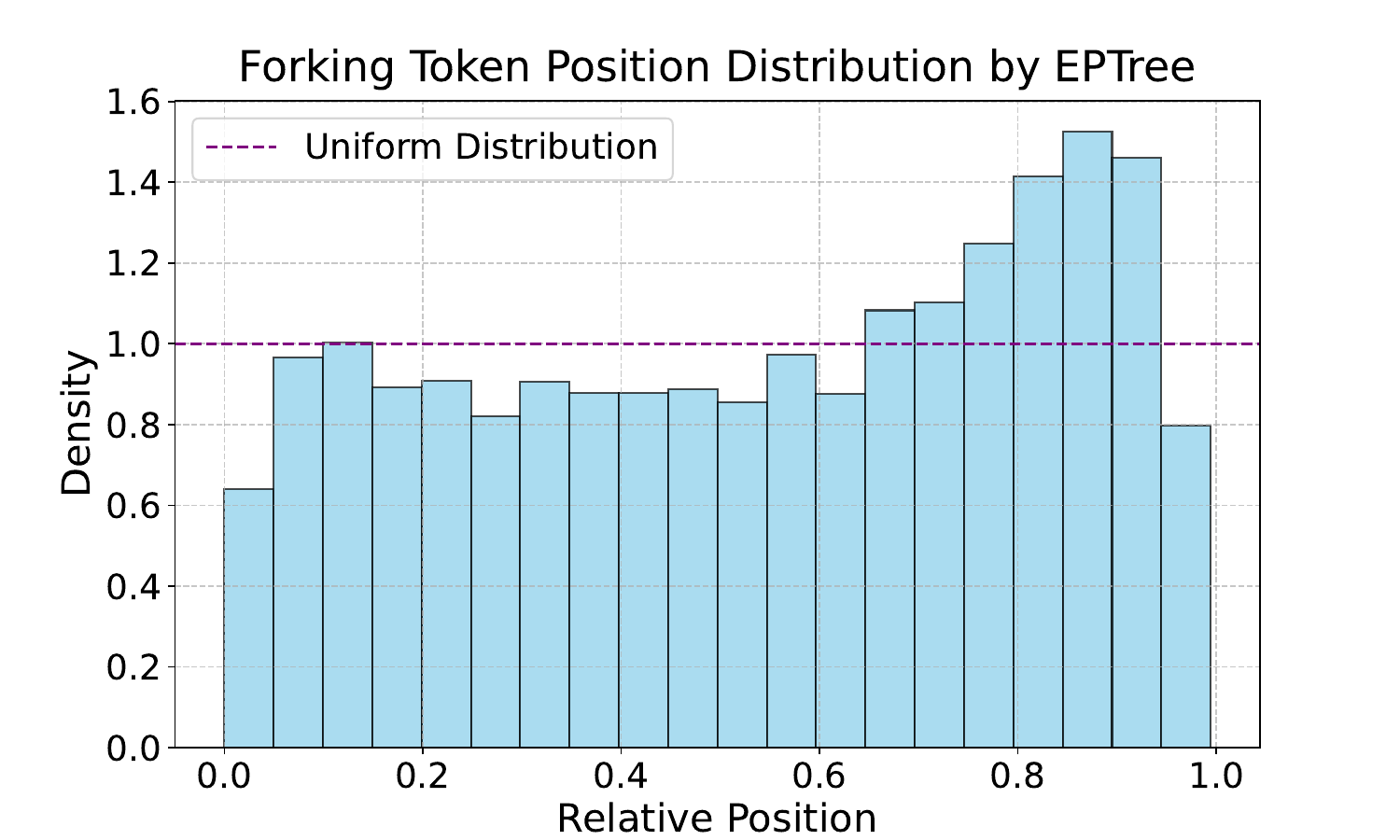}
    \caption{Distribution of the relative positions of forking tokens selected by \treemodel on Omni-MATH-500, based on the ratio of forking position to branch length.}
    \label{fig:forked_token_position_ratio_histogram}
\end{figure}

\section{Theoritical Analysis of \treemodel}
\label{sec:appendix:proof}

\begin{theorem}
\label{thm:leaf_num_bound}
Consider a setting where forking tokens follow a uniform distribution $\mathcal{U}(0,1)$ within each branch, and generation lengths remain fixed without repetition, subject to the constraint that parameter $l \leq 2$. Under these conditions, the entropy-guided tree search algorithm presented in \Cref{sec:entropy-tree} yields a leaf count that is bounded between $\dfrac{4}{3}$ and $\dfrac{12}{5}$ times that of a multi-sampling approach, while maintaining identical computational complexity in terms of total token evaluations.
\end{theorem}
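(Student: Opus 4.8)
The plan is to reduce the statement to an explicit accounting of (i) the number of leaves \treemodel produces and (ii) the expected number of generated tokens, both as functions of the search parameters $(M,N,L,T)$, and then to optimize the resulting leaf-to-token ratio over the admissible regime $L\le 2$. First I would fix the per-branch generation length at a constant $\ell$ (the ``fixed, non-repeating'' assumption) and record the deterministic leaf count from \Cref{sec:entropy-tree}: a single tree carries $1+NTL$ leaves, so the $M$ parallel trees carry $M(1+NTL)$. All of the content of the theorem then lives in the token budget, which is random only through the forking positions.

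The core computation is the expected token cost of one forking step. In iteration $1$ a fork lands at position $u\sim\mathcal U(0,1)$ of the length-$\ell$ initial chain and regenerates the suffix, costing $(1-u)\ell$ tokens, i.e.\ $\ell/2$ in expectation; together with initialization this gives an expected $M\ell\bigl(1+\tfrac{NT}{2}\bigr)$ tokens when $L=1$. For iteration $2$ the subtlety, and the main obstacle, is that a forking token is drawn from the \emph{pool} of tokens created in iteration $1$, so the position must be length-weighted across branches whose new-segment length $s=(1-u)\ell$ is itself random. The expected regenerated suffix is then
\[
\frac{\mathbb{E}[s^2/2]}{\mathbb{E}[s]}=\frac{\tfrac12\,\ell^2\,\mathbb{E}[(1-u)^2]}{\ell\,\mathbb{E}[1-u]}=\frac{\tfrac12\cdot\tfrac13}{\tfrac12}\,\ell=\frac{\ell}{3},
\]
using $\mathbb{E}[(1-u)^2]=1/3$. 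It is precisely this $\ell/3$, rather than the naive $\ell/2$ or the fully-nested $\ell/4$, that produces the constant $12/5$; carefully justifying the length-weighted expectation (and that iteration-$1$ tokens dominate the pool as the tree grows) is where the real work lies.

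I would then match a multi-sampling baseline to the same expected token budget, so that it produces $K=(\text{expected tokens})/\ell$ independent leaves, and form the ratio $\rho=M(1+NTL)/K$, in which the factor $M$ cancels. Writing $x=NT$, the case $L=1$ gives $\rho_1(x)=\tfrac{1+x}{1+x/2}$, increasing from $\rho_1(1)=\tfrac43$ to $\sup\rho_1=2$. The case $L=2$ gives a ratio $\rho_2(x)$ whose dominant balance as $x\to\infty$ is $\tfrac{2x}{(\frac12+\frac13)x}=\tfrac{12}{5}$ and whose value at $x=1$ already exceeds $\tfrac43$. I would verify monotonicity of $\rho_1,\rho_2$ in $x$, so the extremes sit at the endpoints $x=1$ and $x\to\infty$; combining the two cases, the overall minimum is $\rho_1(1)=\tfrac43$ (attained at $(L,NT)=(1,1)$) and the overall supremum is $\sup\rho_2=\tfrac{12}{5}$, establishing the claimed two-sided bound. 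Finally, the ``identical computational complexity'' clause holds by construction: both schemes perform $O(\text{total tokens})$ forward passes, since the entropy evaluations reuse the generation logits and do not change the token-evaluation order.
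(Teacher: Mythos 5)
Your proof follows the paper's own argument almost step for step: the same case split on the iteration count $l=1$ versus $l=2$, unit branch lengths, per-tree leaf counts $1+NT$ and $1+2NT$, token-budget matching against multi-chain sampling, and the increasing ratio $R(x)=(1+x)/(1+x/2)$ on $x=NT\ge 1$ giving $[4/3,2)$ in the first case. The one genuine divergence is in case $l=2$, and it is where your argument has a gap. You drop the main chain from the second-iteration forking pool and take the length-biased, ratio-of-expectations value $\mathbb{E}[s^2/2]/\mathbb{E}[s]=\ell/3$ as the expected regenerated suffix. The paper instead keeps the full pool: the second fork hits the initial chain with probability $1/\bigl(1+t\sum_i(1-x_i)\bigr)$ and branch $i$ with probability $(1-x_i)/\bigl(1+t\sum_i(1-x_i)\bigr)$, so the exact expected new completion length is $\varphi=\mathbb{E}\bigl[\bigl(\tfrac12+\tfrac{t}{2}\sum_i(1-x_i)^2\bigr)/\bigl(1+t\sum_i(1-x_i)\bigr)\bigr]$, a quantity that depends on $n$: it equals $\ln 2-\tfrac14\approx 0.443$ at $n=1$ and decreases to $1/3$ only as $n\to\infty$, giving $\varphi\in\bigl(1/3,\ln 2-\tfrac14\bigr]$ and hence $R\in\bigl[6/(3+2\varphi),\,4/(1+2\varphi)\bigr)\approx[1.544,\,2.4)$. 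Your $1/3$ is only the limit of $\varphi$; to your credit, your ratio-of-expectations computation is a clean analytic derivation of that limit, which the paper itself only observes through Monte Carlo simulation.

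Concretely, two things go wrong or remain open. First, your finite-$x$ case-2 values are not those of the actual process: at $x=1$ the true ratio is $3/\bigl(\tfrac32+\varphi(1)\bigr)=3/\bigl(\tfrac54+\ln 2\bigr)\approx 1.544$, not $18/11\approx 1.636$; substituting $\varphi\equiv 1/3$ systematically inflates the ratio at finite $x$ because it shrinks the token denominator. The theorem's endpoints happen to be robust to this error (the global minimum $4/3$ comes from the exact case $l=1$, and the supremum $12/5$ lives in the asymptotic regime where your approximation becomes exact), but your intermediate claims are false as stated. Second, and more substantively, the strict upper bound $R<12/5$ for every finite configuration requires the one-sided inequality $\varphi(n)\ge 1/3$; your law-of-large-numbers argument gives convergence to $1/3$, not an inequality, and you explicitly defer exactly this point (``where the real work lies'') without doing it. If $\varphi$ could dip below $1/3$ at some finite $n$, the true ratio would exceed your curve $(1+2x)/\bigl(1+\tfrac56 x\bigr)$ and the $12/5$ bound would not follow. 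Note the paper itself closes this hole only semi-empirically, via the exact $n=1$ integral plus Monte Carlo evidence that $\varphi$ decreases monotonically in $n$; to complete your version you would need to import that two-sided bracket on $\varphi$ or prove $\varphi(n)\ge 1/3$ directly. As it stands, that step is missing.
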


\begin{proof}
We present the proof of \Cref{thm:leaf_num_bound} by analyzing two cases based on the branching process illustrated in \Cref{fig:branching}. Let $x_1, x_2, \dots, x_n$ be i.i.d. random variables uniformly distributed on $[0,1]$.

\begin{figure}[htbp]
    \centering
    \includegraphics[width=0.95\linewidth]{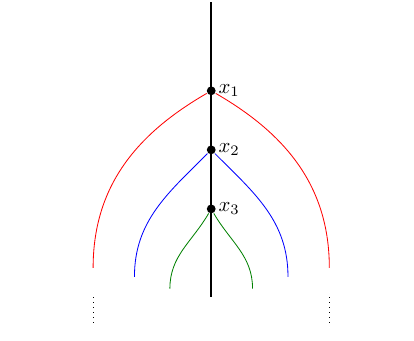}
    \caption{Forking token illustration when $n=3, t=2$, where $x_1, x_2, x_3 \stackrel{\text{i.i.d}}{\sim} \mathcal{U}(0,1)$.}
    \label{fig:branching}
\end{figure}

\textbf{Case $l=1$}: Assume all completion lengths are unit $1$. The total completion length of the entropy-tree is:
\[
L = 1 + t \cdot \sum_{i=1}^n (1 - x_i).
\]
The expected total length is:
\[
\mathbb{E}[L] = 1 + t \cdot \sum_{i=1}^n \mathbb{E}[1 - x_i] = 1 + \frac{nt}{2}.
\]
The number of leaves in the tree is:
\[
N_{\text{tree}} = 1 + nt.
\]
For the multi-sample approach with the same completion tokens, the number of leaves is:
\[
N_{\text{multi}} = 1 + \frac{nt}{2}.
\]
The ratio of the number of leaves in the tree to the multi-sample approach is:
\[
R(n, t) = \frac{1 + nt}{1 + \frac{nt}{2}}.
\]
Let $x = nt \geq 1$. Then:
\[
R(x) = \frac{1 + x}{1 + \frac{x}{2}}.
\]
The derivative of $R(x)$ with respect to $x$ is:
\[
R'(x) = \frac{1}{\left(1 + \dfrac{x}{2}\right)^2} > 0,
\]
indicating that $R(x)$ is monotonically increasing. Evaluating at the endpoints:
\[
R(1) = \frac{4}{3}, \quad \lim_{x \to \infty} R(x) = 2.
\]
Thus, $R(n, t) \in \left[\dfrac{4}{3}, 2\right)$ when $l=1$.

\textbf{Case $l=2$}: In this scenario, the next top-$n$ entropy token appears either on the main chain or on one of the branches. The probabilities of these events are:
\[
\frac{1}{1+t \sum_i\left(1-x_i\right)} \quad \text{and} \quad \frac{1-x_i}{1+t \sum_i\left(1-x_i\right)},
\]
Respectively. The expected new completion length is:
\[
\mathbb{E}\left[l_i\right] = \mathbb{E}\left[\frac{\dfrac{1}{2}+\dfrac{t}{2} \sum_i\left(1-x_i\right)^2}{1+t \sum_i\left(1-x_i\right)}\right] := \varphi.
\]
Through Monte Carlo simulation, it is observed that $\varphi$ monotonically decreases with $n$. As $n \rightarrow \infty$, the ratio tends to $\dfrac{1}{3}$. For $n=1$, the integral value is $\ln 2 - \dfrac{1}{4}$. Thus, $\varphi \in \left(\dfrac{1}{3}, \ln 2 - \dfrac{1}{4}\right]$.

The total generation length is:
\[
1 + \frac{1}{2} n t + n t \cdot \varphi,
\]
And the number of leaves is:
\[
1 + 2 n t.
\]
For the same completion length, the number of leaves in the multi-sample approach is:
\[
1 + \left(\frac{1}{2} + \varphi\right) n t.
\]
The ratio of the number of leaves in the tree to the multi-sample approach is:
\[
R(nt) = \frac{1 + 2 n t}{1 + \left(\dfrac{1}{2} + \varphi\right) n t},
\]
Which is also monotonically increasing in $nt$. Therefore:
\[
R(nt) \in \left[\dfrac{6}{3 + 2\varphi}, \dfrac{4}{1 + 2\varphi}\right).
\]
Specifically:
\[
\begin{cases}
    R(nt)\geq \dfrac{6}{3+2\cdot \left(\ln 2-\dfrac{1}{4}\right)}\approx 1.544 
    \\ R(nt)<\dfrac{4}{1+2\cdot \dfrac{1}{3}}=2.4
\end{cases}
\]
Combining both cases, the ratio $R(n, t)$ lies in the interval $\left[\dfrac{4}{3}, \dfrac{12}{5}\right)$.

\end{proof}

\section{Detailed Evaluation of \treemodel}
\Cref{tab:tree-parameter-evaluation} presents a comprehensive evaluation of \treemodel in comparison to the multi-chain baseline for $k=16$. We select the RL training configuration $(6,2,1,2)$, as it offers a similar inference cost to the chain-16 setting while demonstrating improved efficiency (requiring only one interaction) and effectiveness (achieving 56.9 compared to 52.4).
\label{sec:appendix:eptree_table}
\begin{table}[!t]
    \centering
    \caption{$(M,N,L,T)$ Parameter Evaluation of \treemodel on Omni-MATH-500. The table reports the leaf number, PassRate, and total generation tokens under different parameter combinations for $k=16$ and $k=64$.}
    \label{tab:tree-parameter-evaluation}
    \resizebox{0.45\textwidth}{!}{%
        \begin{tabular}{@{}cccccc@{}}
            \toprule[1.2pt]
            \multicolumn{1}{c}{$\textbf{(M, N, L, T)}$} & \textbf{\#Leaf} $\uparrow$ & \textbf{PassRate} $\uparrow$ & \textbf{\#Token} $\downarrow$ \\
            \midrule
            \multicolumn{4}{c}{\textit{$k=16$}} \\
            \midrule
            multi-16      & 16 & 52.4 & 19858 \\
            (8, 3, 1, 1)  & 32 & 58.4 & 25223 \\
            (7, 2, 1, 2)  & 35 & 59.2 & 26200 \\
            (6, 2, 2, 1)  & 30 & 54.4 & 20537 \\
            (6, 2, 1, 2)  & 30 & 56.9 & 22268 \\
            (5, 3, 1, 2)  & 35 & 58.6 & 25269 \\
            (5, 2, 1, 3)  & 35 & 56.0 & 25210 \\
            (5, 3, 2, 1)  & 35 & 58.0 & 22668 \\
            (5, 1, 2, 3)  & 35 & 58.6 & 21895 \\
            \midrule
            \multicolumn{4}{c}{\textit{$k=64$}} \\
            \midrule
            multi-64      & 64 & 67.4 & 79367 \\
            (16, 2, 2, 2) & 144 & 70.0 & 88257 \\
            (16, 4, 1, 2) & 144 & 71.4 & 101744 \\\
            (16, 2, 1, 4) & 144 & 72.6 & 100468 \\
            (9, 5, 1, 3)  & 144 & 71.9 & 98193 \\
            (9, 3, 1, 5)  & 144 & 71.6 & 97193 \\
            (8, 8, 1, 2)  & 136 & 70.7 & 92946 \\
            (8, 4, 1, 4)  & 136 & 69.7 & 90015 \\
            (8, 8, 2, 1)  & 136 & 68.6 & 79582 \\
            (8, 4, 2, 2)  & 136 & 71.0 & 77768 \\
            (8, 2, 2, 4)  & 136 & 69.4 & 76750 \\
            \bottomrule[1.2pt]
        \end{tabular}
    }
\end{table}

\section{Results on Other Reasoning Benchmarks}
Figure \ref{fig:qwen_results_appendix} and  \ref{fig:glm_results_appendix} provide additional RL training results on MATH500, AMC, LiveCodeBench, and AIME2024, demonstrating that \model with \treemodel sampling outperforms RL with traditional multi-chain sampling across various benchmarks. 
\label{sec:appendix:other_benchmark}
\begin{figure*}[htbp]
    \centering
        \begin{minipage}{\textwidth}
            \centering
            \begin{minipage}{0.4\textwidth}
                \centering
                \includegraphics[width=\textwidth]{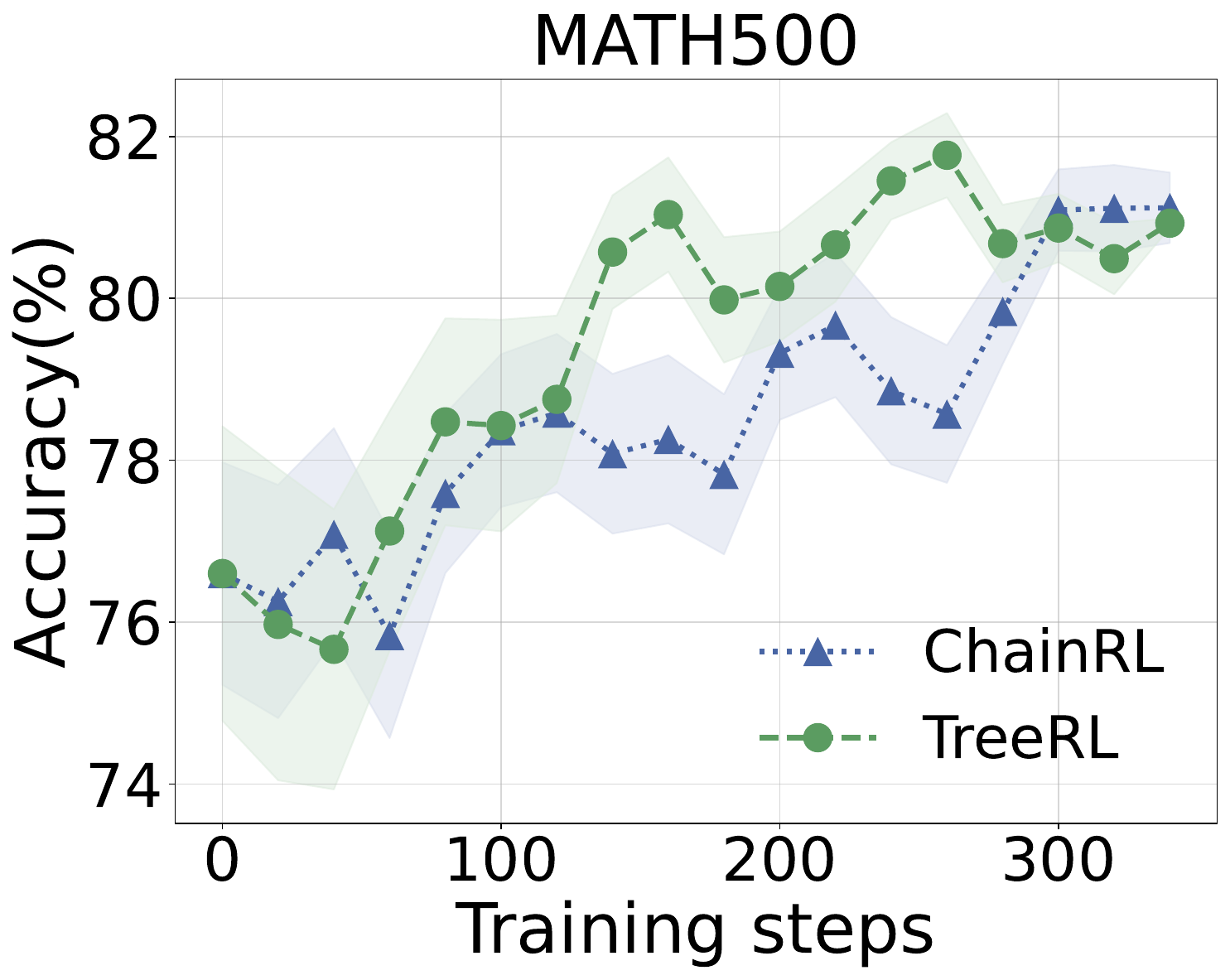}
                \vspace{-5mm}
                \label{fig:first_image_app}
            \end{minipage}
            \hfill
            \begin{minipage}{0.4\textwidth}
                \centering
                \includegraphics[width=\textwidth]{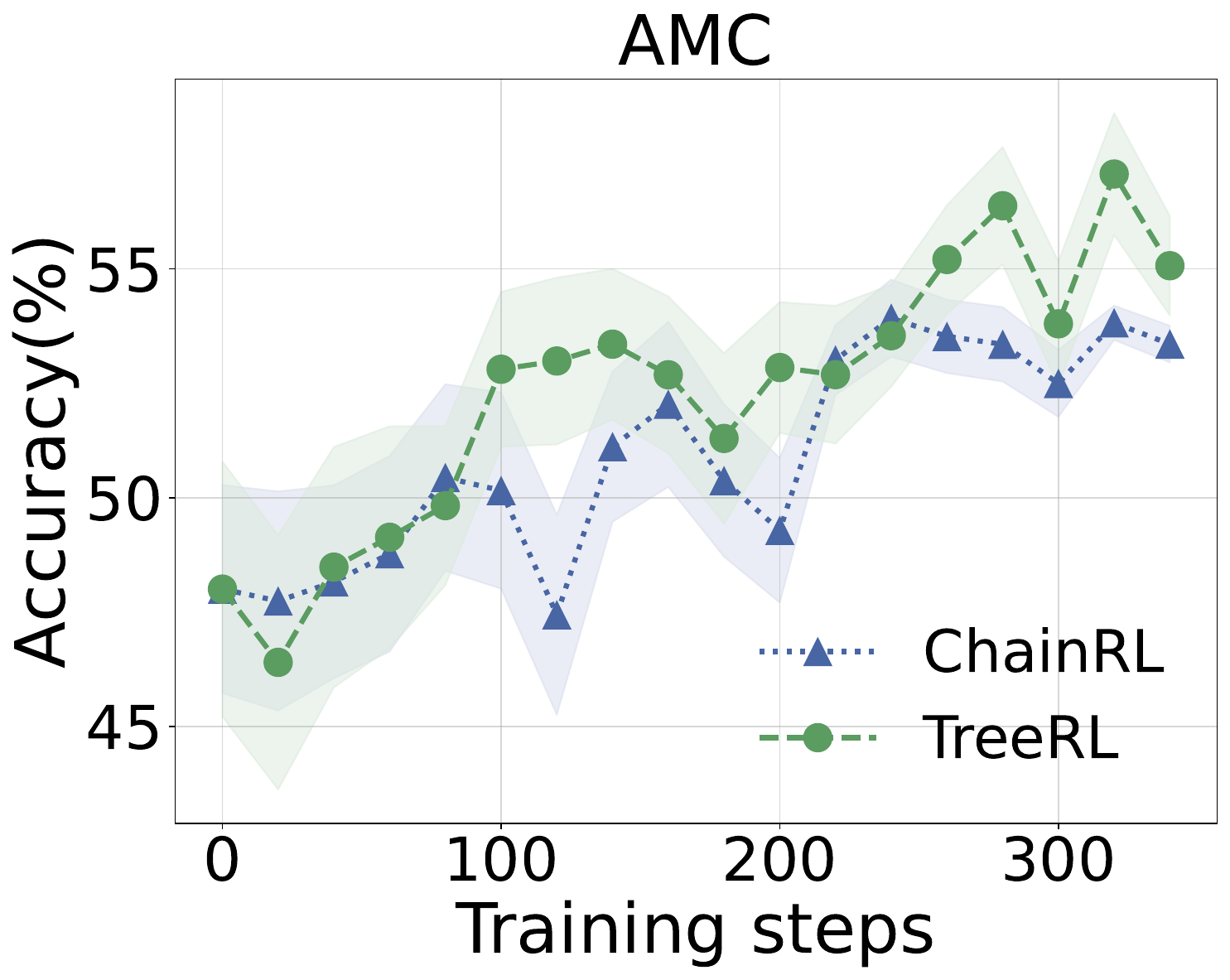}
                \vspace{-5mm}
                \label{fig:second_image_app}
            \end{minipage}
            \vspace{1mm}
            \\
            \begin{minipage}{0.4\textwidth}
                \centering
                \includegraphics[width=\textwidth]{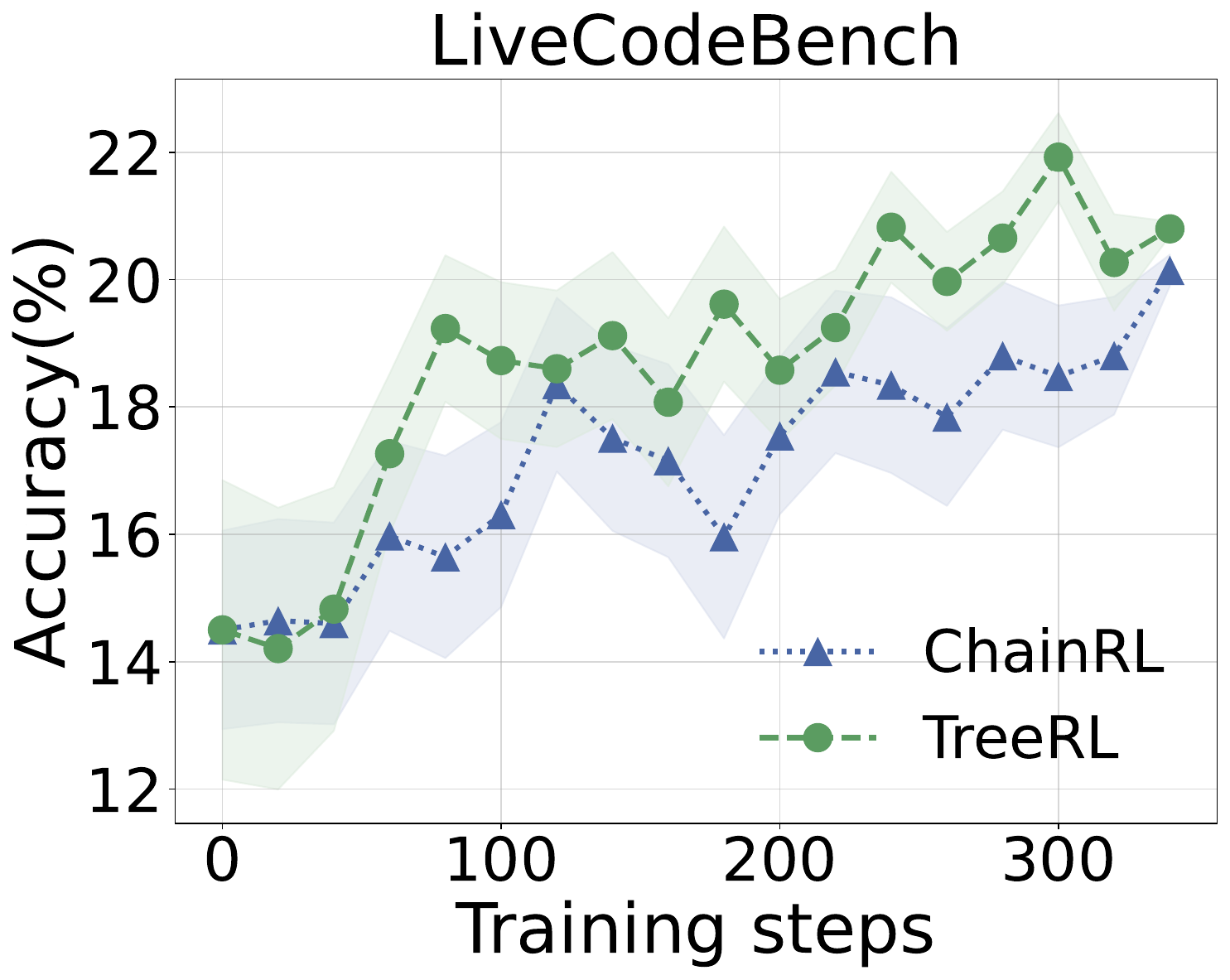}
                \vspace{-5mm}
                \label{fig:third_image_app}
            \end{minipage}
            \hfill
            \begin{minipage}{0.4\textwidth}
                \centering
                \includegraphics[width=\textwidth]{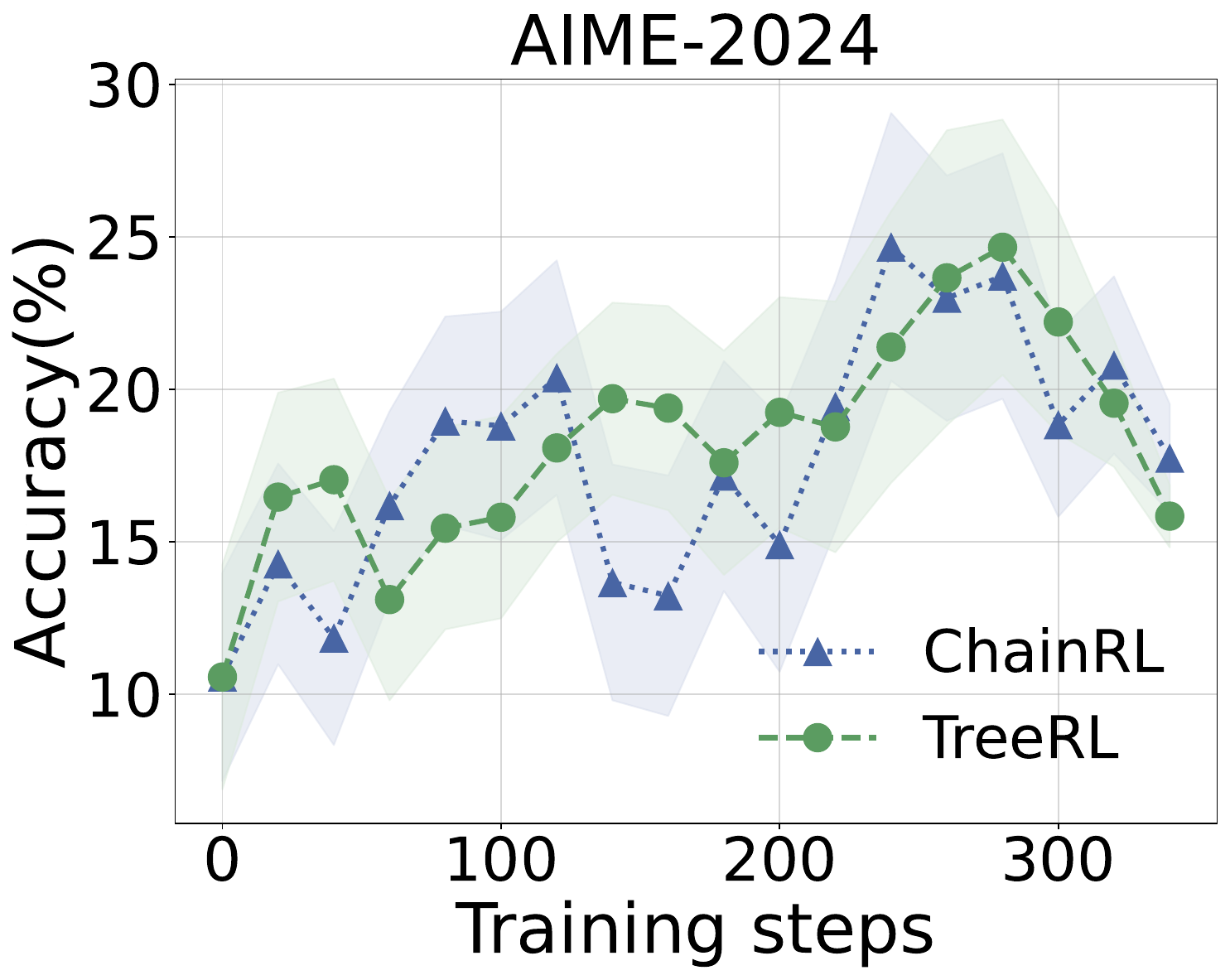}
                \vspace{-5mm}
                \label{fig:fourth_image_app}
            \end{minipage}
        \end{minipage}
    \caption{Performance comparison between \model and ChainRL on MATH500 (Upper Left), AMC (Upper Right), LiveCodeBench (Lower Left), and AIME2024 (Lower Right); experiments are based on Qwen-2.5-14B.}
    \label{fig:qwen_results_appendix}
\end{figure*}

\vspace{1mm}

\begin{figure*}[htbp]
    \centering
        \begin{minipage}{\textwidth}
            \centering
            \begin{minipage}{0.4\textwidth}
                \centering
                \includegraphics[width=\textwidth]{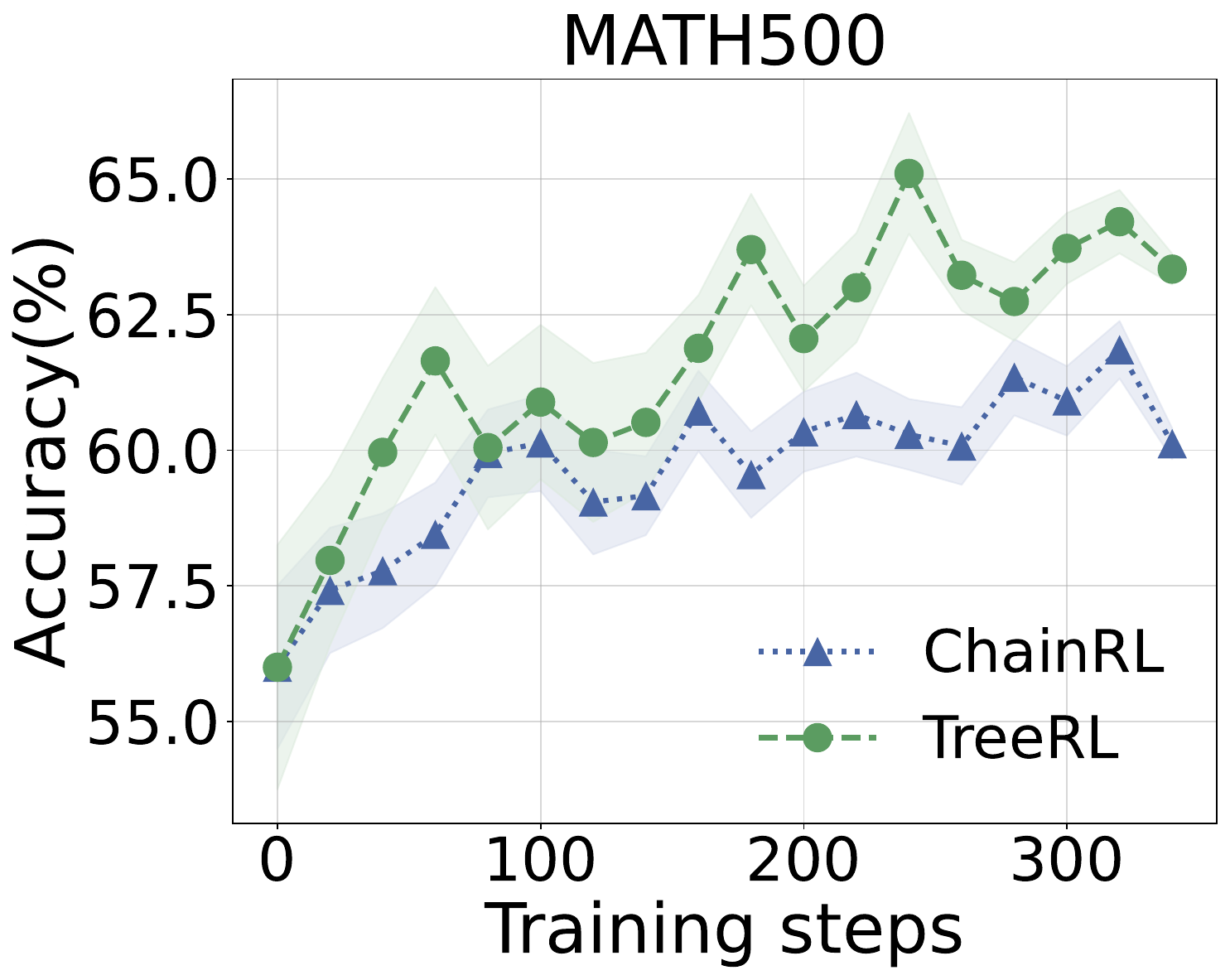}
                \vspace{-5mm}
                \label{fig:fifth_image}
            \end{minipage}
            \hfill
            \begin{minipage}{0.4\textwidth}
                \centering
                \includegraphics[width=\textwidth]{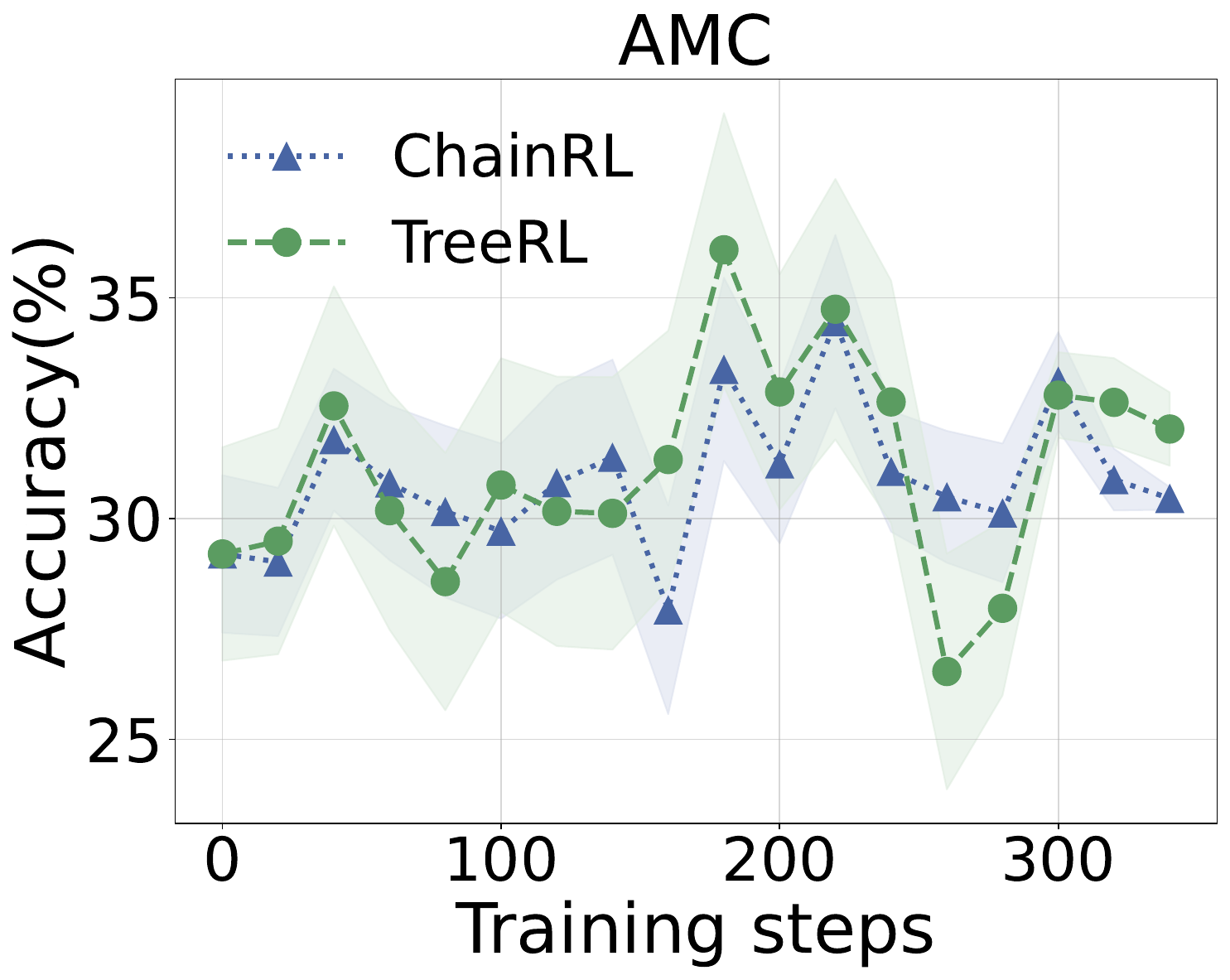}
                \vspace{-5mm}
                \label{fig:sixth_image_app}
            \end{minipage}
            \vspace{1mm}
            \\
            \begin{minipage}{0.4\textwidth}
                \centering
                \includegraphics[width=\textwidth]{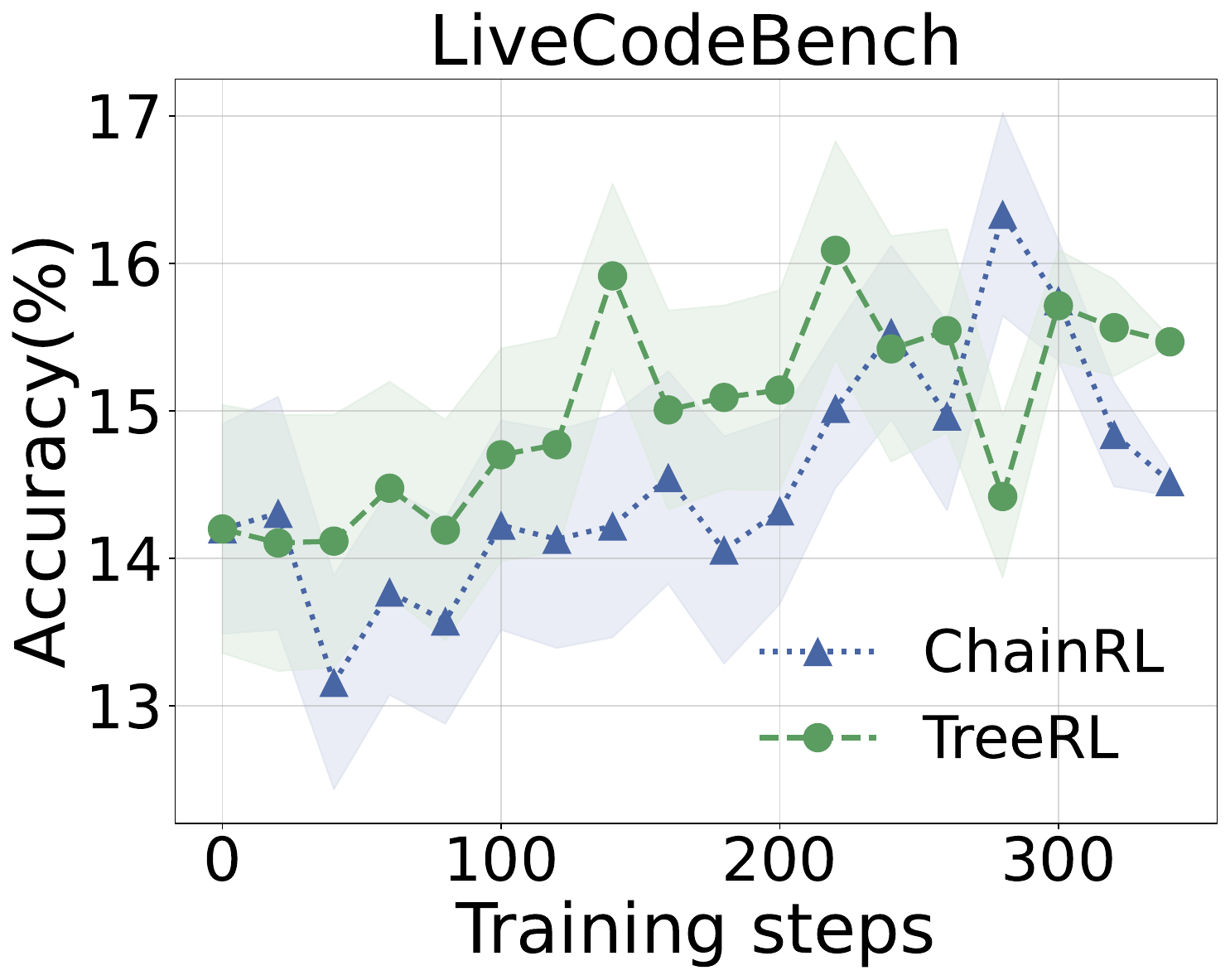}
                \vspace{-5mm}
                \label{fig:seventh_image}
            \end{minipage}
            \hfill
            \begin{minipage}{0.4\textwidth}
                \centering
                \includegraphics[width=\textwidth]{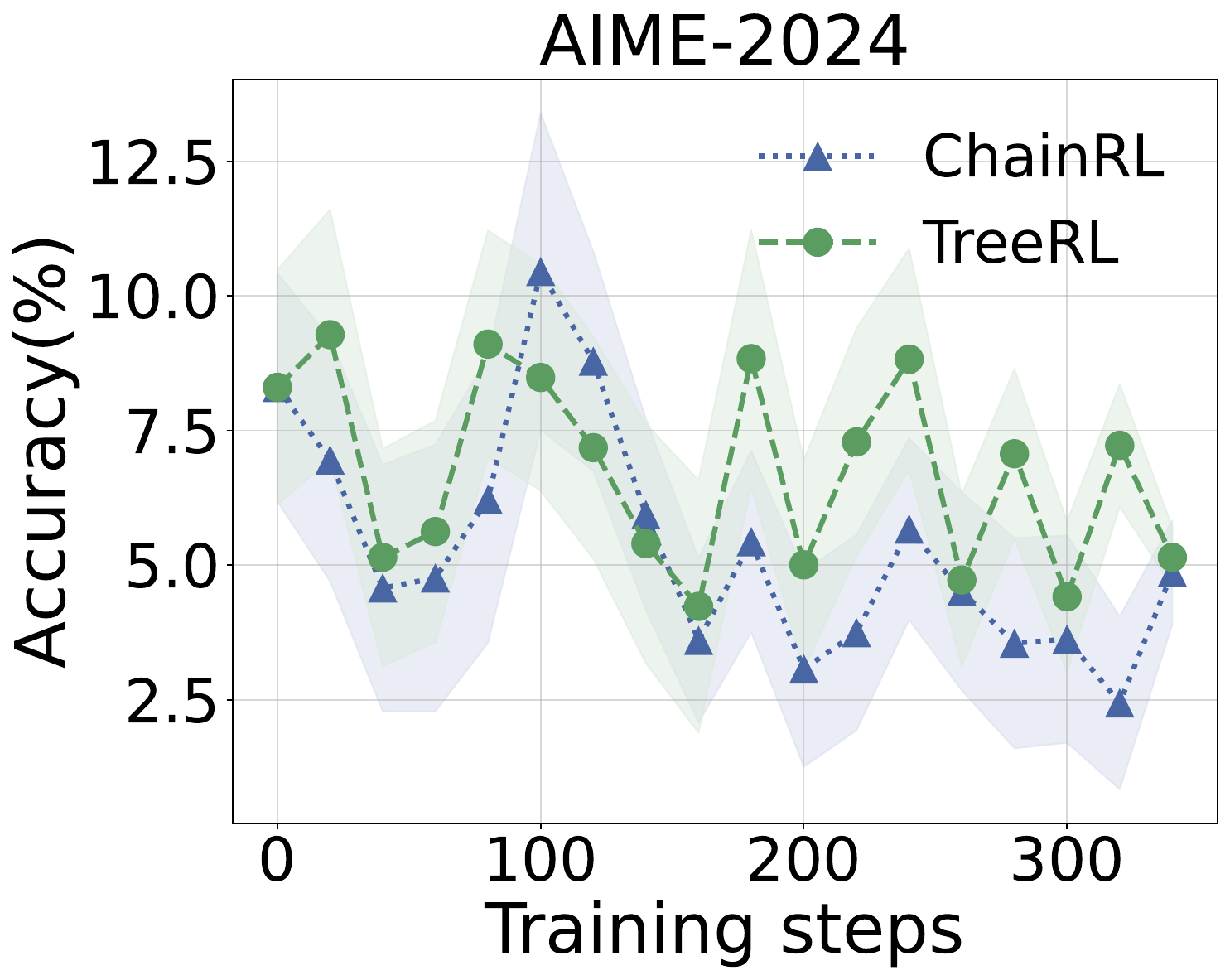}
                \vspace{-5mm}
                \label{fig:eighth_image}
            \end{minipage}
        \end{minipage}
    \caption{Performance comparison between \model and ChainRL on MATH500 (Upper Left), AMC (Upper Right), LiveCodeBench (Lower Left), and AIME2024 (Lower Right); experiments are based on GLM4-9B.}
    \label{fig:glm_results_appendix}
\end{figure*}

\section{\model on General tasks}

To further evaluate \model's generalizability, we further conducted experiments on general tasks.
We evaluate \model and ChainRL on $3$ general tasks: MMLU-Pro~\citep{wang2024mmlu}, Arena-Hard~\citep{li2024crowdsourced}, and IFEval~\citep{zhou2023instruction}.

\begin{itemize}
    \item MMLU-Pro extends MMLU~\citep{hendrycks2020measuring} with more challenging reasoning tasks, fewer noisy questions, and a larger answer set (4-10 options). We use the chain-of-thought prompt and measure the pass rate.
    \item Arena-Hard consists of 500 difficult prompts from Chatbot Arena, focusing on human-like preferences. We evaluate using the win rate score, comparing against the GPT-4-0314 baseline.
    \item IFEval tests the model's ability to follow prompt instructions. We use the strict prompt metric for evaluation.
\end{itemize}

\begin{table}[ht]
    \centering
    \resizebox{\linewidth}{!}{
        \begin{tabular}{lcccc}
        \toprule[1.2pt]
             &  MMLU-Pro & Arena-hard & IFEval & Avg\\
        \midrule
            SFT     & 57.6 & 57.5 & 49.9 & 55.0 \\
            ChainRL & 64.5 & 72.2 & 56.0 & 64.2 \\
            \model  & 64.5 & 71.8 & 58.2 & 64.9 \\
        \bottomrule[1.2pt]
        \end{tabular}
    }
    \caption{Performance on general benchmarks.}
    \label{tab:general-task-performance}
\end{table}


As is shown in \Cref{tab:general-task-performance}, we observe a comparable performance between \model and ChainRL. This indicates that while \model exhibits a particular advantage in reasoning tasks, it maintains its performance in general tasks, achieving robust performance in diverse task types.

\end{document}